\newtheorem{theorem}{Theorem}
\newtheorem{assumption}{Assumption}
\newtheorem{lemma}{Lemma}
\newtheorem{remark}{Remark}
\newtheorem{corollary}{Corollary}
\newtheorem*{proof*}{Proof}
\def\BibTeX{{\rm B\kern-.05em{\sc i\kern-.025em b}\kern-.08em
    T\kern-.1667em\lower.7ex\hbox{E}\kern-.125emX}}
\begin{document}

\title{
How to Collaborate: Towards Maximizing the Generalization Performance in Cross-Silo Federated Learning
}

\author{Yuchang~Sun,~\IEEEmembership{Graduate Student Member,~IEEE},
Marios~Kountouris,~\IEEEmembership{Fellow,~IEEE},
and Jun~Zhang,~\IEEEmembership{Fellow,~IEEE}

\thanks{
        Y. Sun and J. Zhang are with the Department of Electronic and Computer Engineering, The Hong Kong University of Science and Technology, Hong Kong (e-mail: yuchang.sun@connect.ust.hk;eejzhang@ust.hk).
        M. Kountouris is with the Andalusian Research Institute in Data Science and Computational Intelligence (DaSCI), Department of Computer Science and Artificial Intelligence, University of Granada, Spain, and with EURECOM, France (e-mail: mariosk@ugr.es). The work of J. Zhang was supported by the Hong Kong Research Grants Council under the Areas of Excellence scheme grant AoE/E-601/22-R and NSFC/RGC Collaborative Research Scheme grant CRS\_HKUST603/22. The work of M. Kountouris was supported by the European Research Council (ERC) under the European Union’s Horizon 2020 Research and Innovation Programme (Grant agreement No. 101003431).
        (Corresponding author: Jun Zhang)
        }
        
}

\IEEEtitleabstractindextext{%
\begin{abstract}
Federated learning (FL) has attracted vivid attention as a privacy-preserving distributed learning framework. In this work, we focus on cross-silo FL, where clients become the model owners after training and are only concerned about the model’s generalization performance on their local data. Due to the data heterogeneity issue, asking all the clients to join a single FL training process may result in model performance degradation. To investigate the effectiveness of collaboration, we first derive a generalization bound for each client when collaborating with others or when training independently. We show that the generalization performance of a client can be improved by collaborating with other clients that have more training data and similar data distributions. Our analysis allows us to formulate a client utility maximization problem by partitioning clients into multiple collaborating groups. A \underline{h}ierarchical \underline{c}lustering-based \underline{c}ollaborative \underline{t}raining (HCCT) scheme is then proposed, which does not need to fix in advance the number of groups. We further analyze the convergence of HCCT for general non-convex loss functions which unveils the effect of data similarity among clients. Extensive simulations show that HCCT achieves better generalization performance than baseline schemes, whereas it degenerates to independent training and conventional FL in specific scenarios.
\end{abstract}

\begin{IEEEkeywords}
Federated learning, generalization, collaboration pattern, hierarchical cluster.
\end{IEEEkeywords}
}

\maketitle

\section{Introduction}\label{sec:introduction}

\IEEEPARstart{F}{ederated} learning (FL), a distributed model training paradigm, has recently received much attention due to its benefits in preserving data privacy \cite{fedavg,app1,ni1}.
In an FL system, a central server coordinates multiple clients with private data for training deep learning (DL) models.
Specifically, participating clients collaborate to find a global model that achieves satisfactory performance on all clients' data by sharing model updates instead of training data.
During the FL training process, clients receive a global model from the server and optimize it based on the local data. After local training, clients upload the accumulated model updates to the server for aggregation. The server in turn updates the global model using the aggregated model updates and sends it back to clients for the next epoch training.

Depending on the types of clients, FL can be classified into cross-device and cross-silo settings \cite{kairouz2021advances,mimic}. 
In this work, we focus on cross-silo FL \cite{huang2022cross,incentive1,zhang2020batchcrypt} where clients are typically companies or organizations (e.g., banks or hospitals) and become the model owners after FL training.
These clients, which normally have sufficient computation resources, are expected to be continuously available throughout the training process. However, they are usually self-interested and are only concerned about the generalization performance on their local data \cite{huang2021personalized,yu2020salvaging}.
Specifically, each client wants to learn a model that can generalize well on its local distribution.
In practice, some powerful clients with sufficient training data can achieve a good generalization performance with independent training, thus having limited or no incentive to collaborate with others.
Meanwhile, the data among clients are naturally not independent and identically distributed (non-IID); this is known to severely degrade the training performance of the global model \cite{li2022silos}.
Hence, participating in FL may be a suboptimal choice for some clients.
On the other hand, clients with a limited amount of data may prefer collaborating with others to benefit from their training data. Nevertheless, they also tend to exclude the diverged gradients uploaded by clients with disparate data distribution. As such, training a single global model by incorporating all clients cannot suit their requirements. Given the above considerations, our paper aims to address the challenge of how to collaborate among clients under the FL framework, which demands a flexible collaboration plan.


There have been a few attempts to improve the utility of collaboration in FL.
A recent work \cite{cho2022federate} proposed to only incorporate the clients that find the global model beneficial in collaborative training while allowing others to train independently.
Nevertheless, those independent clients, which have different data distributions from most clients, may not enjoy satisfactory performance by making a binary decision of whether to participate in FL or not.
To remedy the deficiency of a single global model, some works \cite{ifca,li2021soft,kim2021dynamic} introduce multiple global models to FL system.
Specifically, they cluster all clients into several groups, and the clients in the same group cooperatively train a global model.
The clustering criterion is tailored to minimize the training error of clients, which, however, is not sufficient to guarantee good performance on unseen or new data \cite{yuan2021we}.
To enhance clients' generalization performance, the design of an appropriate metric for partitioning clients remains an unresolved challenge.
Besides, determining the number of groups without additional knowledge of local data distribution is a notoriously challenging problem.
As a more general consideration, some studies resort to game theory to formulate the client partition problem as a hedonic game without fixing the number of groups \cite{hedonic1,hedonic2,hedonic3,hedonic4,9842363}.
However, these works focus on the theoretical aspects of this game, e.g., stability and equilibrium, and they do not provide any practical and efficient algorithm to solve the problem.

In this work, we investigate how to design the collaboration pattern of clients so as to maximize their generalization performance in a cross-silo FL system.
Our contributions are summarized as follows:
\begin{itemize}
    \item By collaborating, clients can benefit from more training data, but may also suffer from data heterogeneity. To study in detail this phenomenon, we analyze the local test error of a client when it joins any group or trains a model independently.
    Our analytical results unveil that the generalization performance of a client can be improved by involving more training data samples yet excluding the collaborators with diverged data distributions. 
    \item Motivated by this analysis, we define the utility of each client as an upper bound of the test error, where the divergence of data distribution is approximated by the gradient distance.
    Then, we formulate a client utility maximization problem by designing the collaboration among clients.
    To solve the problem efficiently, we propose a \underline{h}ierarchical \underline{c}lustering-based \underline{c}ollaborative \underline{t}raining scheme, coined \emph{HCCT}, which identifies the proper collaboration patterns for clients during training.
    It is worth noting that the number of groups in HCCT is automatically determined without requiring additional tuning. Considering the computational complexity of evaluating client similarity, we further propose an efficient implementation of HCCT.
    \item We prove the convergence of the HCCT scheme for general non-convex loss functions by analyzing the sum of local gradients of clients.
    With a more precise assumption on gradient similarity, we can characterize the effect of client grouping on the convergence performance in cross-silo FL.
    \item Finally, we evaluate the proposed HCCT scheme via simulations in different training tasks and datasets. 
    Our experimental results show that HCCT achieves better generalization performance than the baselines and can adapt to various scenarios. Furthermore, the ablation studies provide guidelines on the selection of hyperparameters in HCCT.
\end{itemize}

\textbf{Organization.}
The rest of the paper is organized as follows. Section \ref{sec:related} introduces the related work. In Section \ref{sec:system}, we describe the system model and a motivating example. Then, we analyze the generalization performance of clients and formulate the client utility maximization problem in Section \ref{sec:problem}. In Section \ref{sec:design}, we propose HCCT to optimize the collaboration pattern, establish its convergence in Section \ref{sec:theory}, and evaluate it via simulations in Section \ref{sec:simulation}. We conclude this paper in Section \ref{sec:conclusion}.

\section{Related Works}\label{sec:related} 
In this section, we introduce related approaches that consider the problem of collaboration patterns in FL with self-interested clients.
We note that personalization techniques, e.g., \cite{huang2021personalized,yu2020salvaging,tan2022towards,arivazhagan2019federated,fallah2020personalized}, can be seen as orthogonal to our study, since these techniques can be employed to further enhance the model performance at clients after the collaboration design.

\textbf{Fair collaboration in single-model FL.} 
Considering the client requirements, Cho \textit{et al.} \cite{cho2022federate} proposed a training scheme named \emph{MAXFL}, where each client participates in FL only if it finds the global model appealing, namely, if the global model yields a smaller training loss than a self-defined threshold.
This threshold is defined as the difference between the current global model and a local model optimized in an independent training process, which, however, introduces additional computation costs.
More importantly, the binary decision used in this work excludes the possibility that those independent clients may formulate a group to further improve their generalization performance.
Another stream of work \cite{huang2020fairness,mohri2019agnostic} aims to achieve fair model performance among clients by modifying the global training objective.
For example, \cite{huang2020fairness} proposed to give a higher weight to the model updates from clients with lower training accuracy.
Nevertheless, these attempts still focus on the setup with a single global model, which limits the performance improvement for most clients.

\textbf{Clustered federated learning.}
To satisfy different local objectives, clustered FL generates multiple global models by clustering clients into several groups \cite{ifca,li2021soft,kim2021dynamic}.
For example, an iterative clustering algorithm for FL named \emph{IFCA} was proposed in \cite{ifca}, where client group identity estimation and federated model training are carried out alternatively.
To estimate the group identity, clients are required to evaluate each global model on their local data to find the one achieving the minimal training loss.
However, this process incurs additional training cost that increases linearly with the number of groups.
More importantly, the clustering criterion in these works focuses on minimizing the training loss \cite{ifca,li2021soft,9174890} but fails to reflect the generalization performance on the unseen data.
Furthermore, selecting the number of groups is known to be challenging in these algorithms as the server does not know local data distribution.

\textbf{Game theoretic approaches.}
As a more general formulation, one branch of studies \cite{hedonic1,hedonic2,hedonic3,hedonic4} views the collaboration design of clients in FL as a hedonic game without fixing the number of groups and arranges the client partition to minimize their training errors.
These works are closely related to our consideration, but they mainly focus on the theoretical aspects by analyzing the optimality, stability, and equilibrium of solutions.
Although an optimal client partition was proposed in \cite{hedonic1}, it is restricted to the mean estimation problem. A practical algorithm that can efficiently arrange client partitions in general DL setups is still lacking.
Moreover, other game theory-based works \cite{incentive1,incentive2,incentive3,incentive4} assume that clients are reluctant to collaborate due to concerns of limited computation resources or privacy leakage.
Therefore, the central server, whose objective is to obtain a satisfactory global model, gives clients sufficient rewards to encourage their participation in FL.
Nevertheless, these works focus on improving a single global model and may become suboptimal in terms of local test error since they ignore the performance requirements of clients.
In contrast to previous work, we propose an efficient algorithm to design the collaboration pattern such that all clients can achieve satisfactory generalization performance.

\section{Preliminaries}\label{sec:system}
In this section, we introduce the cross-silo FL system, followed by a motivating example.

\subsection{Cross-Silo FL}\label{sec:3A}

We consider a cross-silo FL system which consists of a central server and a set of $N$ clients denoted by $\mathcal{N}=\{1,2,\dots, N\}$.
Each client $i \in \mathcal{N}$ has a training dataset $\mathcal{D}_{i}^{tr}$ which includes $D_i \triangleq |\mathcal{D}_{i}^{tr}|$ data samples and follows data distribution $\mathcal{P}_{i}^{tr}$.
Denote the feature and label of any data sample by $\mathbf{x}\in\mathcal{X}$ and $y\in \mathcal{Y}$, respectively.
Clients aim to learn a prediction function $h(\cdot;\mathbf{w}): \mathcal{X}\rightarrow \mathcal{Y}$, which is characterized by model $\mathbf{w}\in\mathbb{R}^M$ with $M$ trainable parameters, to minimize the prediction error on the test data $\mathcal{D}_{i}^{te}$, i.e.,
\begin{equation}
    \min_{\mathbf{w}\in\mathbb{R}^M} \epsilon_{i}(h) \triangleq \mathbb{E}_{(\mathbf{x}, y) \sim \mathcal{P}_{i}^{te}}[l(h(\mathbf{x}; \mathbf{w}),y)].
\end{equation}
Here $l(h(\mathbf{x}; \mathbf{w}),y)$ is the loss function (e.g., the categorical cross-entropy) computed between the predicted label $h(\mathbf{x}; \mathbf{w})$ and the ground-truth label $y$, and $\mathcal{P}_{i}^{te}$ denotes the distribution of test data.
Although the test data are typically unknown during training, it is commonly assumed that the training data and test data are generated from the same underlying distribution \cite{long2019generalization}, i.e.,
\begin{equation}
    \mathcal{P}_{i}^{te} = \mathcal{P}_{i}^{tr} = \mathcal{P}_{i}, \forall i\in\mathcal{N}.
\end{equation}
Besides, the central server has no data sample and its role is solely to coordinate the clients for their collaboration.

In real-world scenarios, clients have various data distributions and different training dataset sizes \cite{li2022silos,duan2020self}.
To be specific, some clients may own a limited number of data samples and thus seek assistance from others for collaboration to improve the model performance.
To describe this scenario, without loss of generality, we assume that clients are partitioned into $K$ groups with indices $k\in[K]$.
Note that the number of groups $K$ is a priori unknown. 
Each group consists of a set of clients $\mathcal{C}_k \subset \mathcal{N}$, and a client belongs to only one group. 
With the available training data $\hat{\mathcal{D}}_{k}^{tr} \triangleq \cup_{i\in\mathcal{C}_k} \mathcal{D}_{i}^{tr}$, clients in group $k$ train a model by minimizing the following objective:
\begin{equation}
    \min_{\mathbf{w}\in\mathbb{R}^M} \epsilon_{k}(h) \triangleq \mathbb{E}_{(\mathbf{x}, y) \sim \hat{\mathcal{P}}_{k}^{tr}}[l(h(\mathbf{x}; \mathbf{w}),y)],
\end{equation}
where $\hat{\mathcal{P}}_{k}^{tr}$ denotes the distribution of training dataset $\hat{\mathcal{D}}_{k}^{tr}$, and $\hat{D}_{k}=|\hat{\mathcal{D}}_{k}^{tr}|$. 

In training epoch $t$, the training process of each group is detailed as follows.
Each client $i$ receives the global model $\hat{\mathbf{w}}_{G_i^{t}}^{t}$ from its corresponding group $G_i^{t}\in[K]$ and optimizes this model by using mini-batch stochastic gradient descent (SGD) for $Q$ steps.
Specifically, at step $q\in[Q]$ of epoch $t$, client $i$ updates the local model according to
\begin{equation}
    \mathbf{w}_{i}^{t,q+1} = \mathbf{w}_{i}^{t,q} - \eta^t \sum_{(\mathbf{x}, y)\in\mathcal{B}_i^{t,q}} \nabla_{\mathbf{w}} l( h((\mathbf{x};\mathbf{w}_{i}^{t,q}),y),
    \label{eq:sgd}
\end{equation}
where $\mathbf{w}_{i}^{t,0} = \hat{\mathbf{w}}_{G_i}^{t}$, $\eta^t$ is the learning rate, and $\mathcal{B}_i^{t,q}$ is a batch of training data randomly sampled from $\mathcal{D}_{i}^{tr}$.
Afterwards, each client summarizes the model updates as
\begin{equation}
    \mathbf{g}_i^{t} \triangleq \sum_{q=1}^{Q} \frac{1}{|\mathcal{B}_i^{t,q}|} \sum_{(\mathbf{x}, y)\in\mathcal{B}_i^{t,q}} \nabla_{\mathbf{w}} l( h((\mathbf{x};\mathbf{w}_{i}^{t,q}),y),
\end{equation}
which is then uploaded to the server.
This server then aggregates these model updates from clients in each group and updates the global model by averaging them, i.e.,
\begin{equation}
    \hat{\mathbf{w}}_{G_i^{t}}^{t+1} = \hat{\mathbf{w}}_{G_i^{t}}^{t} - \eta^t \hat{\mathbf{g}}_{G_i^{t}}^{t},
\end{equation}
where $\hat{\mathbf{g}}_{G_i^{t}}^{t} \triangleq \sum_{i\in\mathcal{C}_{G_i^{t}}} \frac{D_i}{\hat{D}_{G_i^{t}}} \mathbf{g}_i^{t}$.
We summarize the main notations in Table \ref{table:notation}.

\begin{table}[t]
\caption{Main Notations}
\label{table:notation}
\centering
\resizebox{\columnwidth}{!}{
\begin{tabular}{c|c}
\hline
\textbf{Notation} & \textbf{Meaning}       \\ \hline
$\mathcal{D}_{i}^{tr}$, $\mathcal{D}_{i}^{te}$ & Training dataset and test dataset at client $i$ \\
$\mathcal{P}_{i}^{tr}$, $\mathcal{P}_{i}^{te}$ & Distribution of training data and test data at client $i$\\
$D_{i}$ & Number of training data samples at client $i$  \\
$\hat{\mathcal{D}}_{k}^{tr}$ & Training dataset at group $k$ \\
$\hat{D}_{k}$ & Number of training data samples at group $k$ \\
$\hat{\mathcal{P}}_{k}$ & Distribution of training data at group $k$ \\
$\mathcal{C}_k$ & Client set of group $k$ \\
$G_i^t$ & The group index of $i$-th client at epoch $t$ \\
$\mathbf{w}_i^t$, $\hat{\mathbf{w}}_{k}^t$ & Initial model of client $i$ or group $k$ at epoch $t$ \\
$h(\cdot;\mathbf{w})$ & Prediction function characterized by $\mathbf{w}$ \\
$\epsilon_i(\cdot)$, $\epsilon_{k}(\cdot)$ & Expected risk function of client $i$ or group $k$ \\
$\mathbf{g}_i^{t}$ & Model update of client $i$ at epoch $t$ \\
$\hat{\mathbf{g}}_{k}^{t}$ & A weighted average of updates of clients in group $k$ \\
\hline
\end{tabular}
}
\end{table}

It is worth noting that clients have various data distributions \cite{fedavg,kairouz2021advances}, and thus some clients' data may differ significantly from others.
However, under strict privacy protection, the local data distribution is only visible to the client itself \cite{house2012consumer}.
If the process of joining a group is performed arbitrarily, clients may suffer from performance degradation due to the data heterogeneity issue \cite{li2022silos}.
To clearly demonstrate this phenomenon, we present an example in the next subsection.

\subsection{Motivating Example}\label{sec:example}
We consider a cross-silo FL system with $N=3$ clients. They collaborate to classify data samples extracted from the CIFAR-10 \cite{cifar10} dataset by training a convolutional neural network (CNN) model with four convolutional layers.
We assume that clients $1$ and $2$ have the same data distributions and client $3$ has different data distribution deviating from the two others.
To be concrete, clients $1$ and $2$ have respectively $20\%$ and $80\%$ of five classes of training data, while client $3$ has all the training data samples of the other five classes.
We simulate different collaboration patterns among three clients and evaluate their test errors, as shown in Table \ref{table:example}, where the best generalization performances are highlighted in bold while the worst ones are underlined.

\begin{table}[ht]
\caption{Test error (\%) after 10 training epochs in the CIFAR-10 example.}
\label{table:example}
\centering
\resizebox{.95\columnwidth}{!}{
\begin{tabular}{cc|c|c|c|c|c}
\hline
\multicolumn{2}{c|}{Pattern}                            & Client $1$ & Client $2$ & Client $3$ & Mean & Std.   \\ \hline
\multicolumn{2}{c|}{Independent}                        & $38.16 \pm 1.53$    &  $\mathbf{25.12 \pm 0.83}$   & $\mathbf{15.8 \pm 0.82}$    &  $26.36$   & $9.19$   \\ 
\multicolumn{2}{c|}{Global}                             & $49.90 \pm 1.31$     & $49.40 \pm 1.62$          &  \underline{$31.32 \pm 1.53$}          &  $43.05$      &  $8.33$      \\ \cline{1-2}
\multicolumn{1}{c|}{\multirow{3}{*}{Partial$^*$}} & $(1,2)$ &  $\mathbf{23.54 \pm 1.40}$       & $\mathbf{25.28 \pm 0.64}$           & $\mathbf{15.90 \pm 1.19}$           &  $\mathbf{21.58}$       & $\mathbf{4.10}$       \\  
\multicolumn{1}{c|}{}                         & $(1,3)$ & \underline{$96.14 \pm 4.81$}           &  $26.12 \pm 0.29$          & $16.21 \pm 0.67$           &  \underline{$46.15$}       &  \underline{$35.58$}      \\  
\multicolumn{1}{c|}{}                         & $(2,3)$ & $39.42 \pm 3.72$          &  \underline{$56.84 \pm 3.62$}          & $27.30 \pm 1.85$          &  $41.18$       & $12.32$       \\ \hline

\multicolumn{7}{l}{\footnotesize$^*$Two clients collaborate on the global model, while the other client trains independently.} 
\end{tabular}
}
\end{table}

From the above results, we observe that the best collaboration pattern is teaming up clients $1$ and $2$ while allowing independent training of client $3$, as shown in Fig. \ref{fig:three-client}.
In this pattern, all of them can learn models achieving the best local generalization performance. 
In comparison, with independent training, client $1$ suffers from a limited amount of training data, and a simple global training approach degrades the model performance of client $3$ due to the data heterogeneity issue.
Meanwhile, arbitrary collaboration, such as pairing clients $1$ and $3$, can severely impair the model performance because of their divergent data distributions.
This example demonstrates the importance of designing a collaboration pattern among clients to improve their generalization performance.
To quantify the effect of collaborative training, we analyze the test error of clients and then formulate a problem to maximize their generalization performance in the next section.

\begin{figure}[!t]
    \centering
    \includegraphics[width=\columnwidth]{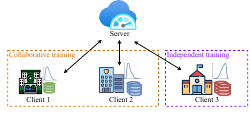}
    \caption{An example of collaboration pattern among three clients in cross-silo FL.}
    \label{fig:three-client}
\end{figure}

\section{Problem Formulation}\label{sec:problem}
\subsection{Theoretical Analysis of Generalization Performance}
Consider that client $i$ joins in a group with index $G_i \in [K]$ for model training\footnote{In the following, we omit the superscript $t$ in $G_i^t$ for simplification when not causing any confusion.}.
This group has $\hat{D}_{G_i}$ training data samples that follow a distribution of $\hat{\mathcal{P}}_{G_i}$.
When this client performs independent training, we have $\mathcal{C}_{G_i}=\{i\}$, $\hat{D}_{G_i}=D_i$, and $\hat{\mathcal{P}}_{G_i}=\mathcal{P}_i$.
In the following theorem, we quantify the test error of client $i$ by joining any group.
\begin{theorem}\label{thm1}
Consider the loss function with $\mu$-strong convexity and $L$-Lipschitz continuity.
For a prediction function $h$ and any $0 <\delta < 1$, the following holds:
\begin{align}
    \epsilon_{i}(h) \leq & \frac{4L^2}{\delta \mu \hat{D}_{G_i} } + d_1\left(\hat{\mathcal{P}}_{G_i}, \mathcal{P}_{i}\right) +\lambda,
    \label{eq:bound}
\end{align}
with probability $1-\delta$, where $d_1\left(\hat{\mathcal{P}}_{G_i}, \mathcal{P}_{i}\right)$ is the distribution divergence (e.g., $\mathcal{H}$-divergence) between $\hat{\mathcal{P}}_{G_i}$ and $\mathcal{P}_{i}$, and $\lambda = \min \left\{\mathbb{E}_{\mathcal{P}_{i}} \left[ f_{G_i}(\mathbf{x}) - f_{i}(\mathbf{x}) \right], \mathbb{E}_{\hat{\mathcal{P}}_{G_i}} \left[ f_{G_i}(\mathbf{x}) - f_{i}(\mathbf{x}) \right] \right\}$ with true labeling functions $f_{i}(\cdot)$ and $f_{G_i}(\cdot)$.
\end{theorem}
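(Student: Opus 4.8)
The plan is to combine two standard ingredients: a domain-adaptation decomposition that isolates the price of a distribution mismatch, and a stability-based generalization bound that controls the population risk of the model trained on the pooled group data. Throughout I read $h$ as the model obtained by (approximately) minimizing the empirical risk over $\hat{\mathcal{D}}_{G_i}^{tr}$, so that its empirical risk $\hat{\epsilon}_{G_i}(h)$ on the group data is negligible; otherwise the argument simply carries an extra additive $\hat{\epsilon}_{G_i}(h)$ term.

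\textbf{Step 1: distribution mismatch.} First I would relate $\epsilon_{i}(h)$, the risk of $h$ under client $i$'s distribution $\mathcal{P}_{i}$ with its true labeling function $f_{i}$, to the group population risk $\epsilon_{G_i}(h)$ under $\hat{\mathcal{P}}_{G_i}$ with labeling function $f_{G_i}$. Inserting and subtracting $\mathbb{E}_{\mathcal{P}_i}[l(h(\mathbf{x}),f_{G_i}(\mathbf{x}))]$ and then $\mathbb{E}_{\hat{\mathcal{P}}_{G_i}}[l(h(\mathbf{x}),f_{G_i}(\mathbf{x}))]$ splits the gap $\epsilon_{i}(h)-\epsilon_{G_i}(h)$ into (i) a \emph{labeling} discrepancy $\mathbb{E}_{\mathcal{P}_i}[l(h,f_i)-l(h,f_{G_i})]$, which, using that the loss varies slowly in its label argument, is controlled by the labeling-function gap $\mathbb{E}_{\mathcal{P}_i}[f_{G_i}(\mathbf{x})-f_{i}(\mathbf{x})]$, and (ii) a \emph{marginal} discrepancy $\mathbb{E}_{\mathcal{P}_i}[g]-\mathbb{E}_{\hat{\mathcal{P}}_{G_i}}[g]$ with $g(\mathbf{x})=l(h(\mathbf{x}),f_{G_i}(\mathbf{x}))$, which is exactly what a distribution divergence such as the $\mathcal{H}$- (or $\mathcal{H}\Delta\mathcal{H}$-) divergence $d_1(\hat{\mathcal{P}}_{G_i},\mathcal{P}_{i})$ upper bounds once $g$ lies in the associated function class, guaranteed here up to constants by Lipschitzness. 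Performing the same two operations in the opposite order (change of measure before change of labeling function) evaluates the labeling discrepancy under $\hat{\mathcal{P}}_{G_i}$ instead of $\mathcal{P}_{i}$; taking the smaller of the two decompositions yields the $\min$ in the definition of $\lambda$. Hence $\epsilon_{i}(h)\le \epsilon_{G_i}(h) + d_1(\hat{\mathcal{P}}_{G_i},\mathcal{P}_{i}) + \lambda$, deterministically in $h$, which is the classical domain-adaptation bound specialized to our notation.

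\textbf{Step 2: generalization over the group.} It then remains to bound $\epsilon_{G_i}(h)$. Since the per-sample loss is $L$-Lipschitz and the pooled training objective over the $\hat{D}_{G_i}$ samples is $\mu$-strongly convex, its empirical risk minimizer is uniformly stable with stability parameter $O\!\left(L^2/(\mu\hat{D}_{G_i})\right)$ (Bousquet and Elisseeff); carrying the constants gives $4L^2/(\mu\hat{D}_{G_i})$. Uniform stability implies $\mathbb{E}\big[\epsilon_{G_i}(h)-\hat{\epsilon}_{G_i}(h)\big]\le 4L^2/(\mu\hat{D}_{G_i})$, and with $\hat{\epsilon}_{G_i}(h)\approx 0$ this makes the expected population risk at most $4L^2/(\mu\hat{D}_{G_i})$. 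Because $\epsilon_{G_i}(h)\ge 0$, Markov's inequality converts this into $\epsilon_{G_i}(h)\le 4L^2/(\delta\mu\hat{D}_{G_i})$ with probability $1-\delta$, which accounts for the $1/\delta$ factor. Substituting into Step 1 yields \eqref{eq:bound}.

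\textbf{Main obstacle.} The delicate part is Step 1: making the labeling discrepancy precise (in what sense the loss is Lipschitz in the label, why it reduces to $\mathbb{E}[f_{G_i}-f_{i}]$ rather than $\mathbb{E}|f_{G_i}-f_{i}|$, and why one may keep the better of the two orderings to obtain $\lambda$), and verifying that $l(h(\cdot),f_{G_i}(\cdot))$ is admissible for whichever divergence $d_1$ denotes, so that term (ii) is genuinely at most $d_1(\hat{\mathcal{P}}_{G_i},\mathcal{P}_{i})$ rather than a constant multiple of it. The only other subtlety is interpreting ``for a prediction function $h$'': the stability bound of Step 2 pertains to the trained (ERM) model, so strictly the theorem should be read with $h$ that model and $\hat{\epsilon}_{G_i}(h)$ either negligible (realizable regime) or carried as an additional additive term.
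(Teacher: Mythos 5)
Your proposal follows essentially the same route as the paper: the paper's Appendix A uses exactly the Ben-David et al.\ domain-adaptation decomposition (adding and subtracting the two auxiliary risks $\epsilon_{G_i}(h,f_i)$ and $\epsilon_i(h,f_{G_i})$, whose two orderings give the $\min$ defining $\lambda$, plus the $d_1$ term), and then invokes the stability result of Shalev-Shwartz et al.\ for $\mu$-strongly convex, $L$-Lipschitz losses to get $\epsilon_{G_i}(h)\le 4L^2/(\delta\mu\hat{D}_{G_i})$ with probability $1-\delta$, which is precisely your stability-plus-Markov argument unpacked. Your closing caveat that $h$ must be read as the model trained on the pooled group data (with negligible empirical risk) is also implicit in the paper's use of that cited theorem, so the two proofs match in substance.
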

\begin{proof*}

The result is proved following \cite{ben2010theory,shalev2010learnability}.
Please refer to Appendix A.
\end{proof*}

\begin{remark}
    According to Theorem \ref{thm1}, the test error of client $i$ decreases with more training samples (i.e., a larger value of $\hat{D}_{G_i}$) or smaller distribution divergence between training data and test data (i.e., a smaller value of $d_1\left(\hat{\mathcal{P}}_{G_i}, \mathcal{P}_{i}\right)$).
\end{remark}

The above results show that any client should seek collaborators with similar training data distributions and more data samples.
However, the data distributions of clients cannot be disclosed to others due to privacy concerns.
To solve this problem, in the following subsection, we propose a method to estimate the data divergence and formulate the client utility maximization problem.

\subsection{Client Utility Maximization}
Based on the analytical results, we define the utility of client $i$ as the upper bound of its test error in Theorem \ref{thm1}.
Nevertheless, the data distribution divergence $d_1\left(\hat{\mathcal{P}}_{G_i}, \mathcal{P}_{i}\right)$ in \eqref{eq:bound} is intractable due to the unknown $\hat{\mathcal{P}}_{G_i}$ since clients do not share the data or data distribution to others.
To solve this problem, we propose to use the gradient divergence to approximate the divergence of data distribution.
This is inspired by the gradient matching methods in domain generalization \cite{shi2021gradient} and dataset condensation \cite{zhao2020dataset}, where close gradient directions indicate similar data distributions.
To be specific, define $s\left( i, G_i \right)$ as the similarity between client $i$ and its corresponding group $G_i$, where $s: \mathbb{R}^{M} \times \mathbb{R}^{M} \rightarrow \mathbb{R}$ is a similarity function.
One approach is to use cosine similarity of gradients:
\begin{equation}
    s\left( i, G_i^t \right) 
    = \cos \langle \mathbf{g}_i^t, \hat{\mathbf{g}}_{{G_i^t}}^{t} \rangle
    = \frac{\left\langle \mathbf{g}_i^t, \hat{\mathbf{g}}_{{G_i^t}}^t \right\rangle}{\left\|\mathbf{g}_i^t\right\|_2 \left\|\hat{\mathbf{g}}_{{G_i^t}}^t\right\|_2}.
\end{equation}
Thus, the utility of client $i$ is given by
\begin{equation}
    U_i(G_i^t) = U_i(\hat{D}_{G_i^t}, \hat{\mathbf{g}}_{{G_i^t}}^{t}) \triangleq - \frac{\alpha}{\hat{D}_{G_i^t}} + s\left( i, G_i^t \right) + \beta,
    \label{eq:utility}
\end{equation}
where $\alpha > 0$ is a trade-off constant, and $\beta > 0$ is a large constant ensuring non-negative utility. Such non-negativity guarantees its physical meaning of utility without impacting system design.
We note that $\alpha$ balances between training data volume and gradient similarity. If clients have sufficient data, they prefer collaborating with similar clients by adopting a small $\alpha$; otherwise, they are hungry for more data and choose a large $\alpha$.
The impact of $\alpha$ is discussed further in Section \ref{sec:effect} via simulations.
According to the definition in \eqref{eq:utility}, a client can improve its utility by involving more training data and increasing the similarity between gradients.

We now formulate the client utility maximization problem by designing client partition $\Pi$ as follows
\begin{align}
    \textbf{P1}: \max_{\Pi} & \sum_{i=1}^{N} U_i(\hat{D}_{G_i^t}, \hat{\mathbf{g}}_{G_i^t}^t), \label{eq:objective} \\ 
    \text{s.t. } & \Pi \!\triangleq\! \left\{\{\mathcal{C}_k\}_{k=1}^{K}|\mathcal{C}_k \neq \emptyset, \mathcal{C}_k \cap \mathcal{C}_{k^\prime} = \emptyset, \bigcup_{k=1}^{K} \mathcal{C}_{k} = \mathcal{N} \right\}.
    \label{eq:condition}
\end{align}
Here the constraint in \eqref{eq:condition} implies that all clients are divided into $K$ non-empty and non-overlapping groups.
Note that the number of groups $K$ is not a predetermined value but an unknown integer satisfying $1 \leq K \leq N$.

Finding an optimal solution for Problem (P1) is $\mathcal{NP}$-hard in general \cite{hedonic4}.
One straightforward approach is to search over all $B_N$ possible partitions, where $B_N$ denotes the Bell number of a set with $N$ elements.
This approach, however, introduces extremely high computational complexity.
We notice that if $\alpha=0$, Problem (P1) reduces to a $k$-means problem which partitions $N$ clients into $K$ groups to minimize the within-cluster variance, i.e., the distance between each point $\mathbf{g}_{i}^{t}$ and its corresponding center $\mathbf{g}_{G_i}^{t}$.
There are many classic methods for solving this problem, e.g., $k$-means clustering \cite{kmeans}, $k$-means++ \cite{kmeans++} and hierarchical clustering \cite{ward,grosswendt2020theoretical} algorithms.
However, the presence of the constraint $\alpha > 0$ makes it unfeasible to define a suitable distance metric to solve Problem (P1). Consequently, these methods cannot be directly employed. Drawing inspiration from the aforementioned clustering algorithms, we introduce an algorithm to efficiently address the client partition in Problem (P1) in the following section.

\section{Hierarchical Clustering-based Collaborative Training Scheme}\label{sec:design}

\begin{figure*}[!t]
    \centering
    \includegraphics[width=\linewidth]{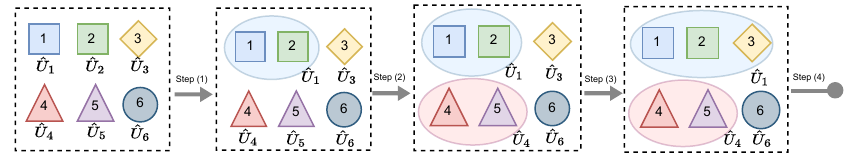}
    \caption{An illustration of the client partition process in HCCT with six clients, where the same shapes indicate similar data distributions. In the beginning, each client is a singleton group. In Step (1), clients $1$ and $2$ are clustered together since $(1,2)=\arg\max_{(k_1,k_2)} B(k_1,k_2)$. In Step (2), clients $4$ and $5$ are clustered together since $(4,5)=\arg\max_{(k_1,k_2)} B(k_1,k_2)$. In Step (3), group $1$ and client $3$ are clustered together since $(1,3)=\arg\max_{(k_1,k_2)} B(k_1,k_2)$.
    In Step (4), we stop partitioning since there is no benefit, i.e., $B(k_1, k_2)\leq 0, \forall k_1\neq k_2$. The final group number in this example is $3$.}
    \label{fig:demo}
\end{figure*}

In this section, we propose a \textit{hierarchical clustering-based collaborative training} scheme, coined HCCT, which generates a collaboration pattern to improve the generalization performance of all clients.
In HCCT, the training process is divided into $T$ training epochs.
In each training epoch, we partition the clients into $K$ groups and each group includes one or multiple clients.
After partition, clients train current models based on their local data.
If a group has only one client, this client performs independent training.
Otherwise, the clients in this group participate in collaborative training, with the server aggregating their model gradients.
We summarize the training process of HCCT in Algorithm \ref{algorithm} and detail the client partition step as follows.

\begin{algorithm}[tb]
\caption{Hierarchical Clustering-Based Collaborative Training (HCCT)}\label{algorithm}
\begin{algorithmic}[1] 
\STATE{Initialize $\mathbf{w}_{i}^{0}, \forall i=1,2,\dots,N$;}
\FOR{$t=0,1,\dots,T-1$}
    \IF{$t=0$}
        \STATE{Initialize $K=N$ groups $\mathcal{C}_k = \{k\}, \forall k\in[K]$;}
    \ELSE
        \STATE{Partition clients according to $\{\mathcal{C}_k\}_{k=1}^{K} \leftarrow \textit{ClientPartition} (\{\mathbf{g}_i^t\}_{i\in\mathcal{N}})$;}
    \ENDIF
    \FOR{each group $k=1,2,\dots,K$}
        \IF{ $|\mathcal{C}_k| = 1$ }
            \STATE{\textit{// Independent training}}
            \STATE{Client $i\in \mathcal{C}_k$ performs model training according to \eqref{eq:sgd} and uploads the gradient $\mathbf{g}_i^t$ to the server;}
            \STATE{Update local model as $\mathbf{w}_{i}^{t+1} = \mathbf{w}_{i}^{t} - \eta^t \mathbf{g}_i^t$;}
        \ELSE
            \STATE{\textit{// Collaborative training}}
            \STATE{Compute the dataset size as $\hat{D}_k = \sum_{i\in\mathcal{C}_k} D_i$;}
            \STATE{Compute the global model as $\hat{\mathbf{w}}_{k}^{t} = \sum_{i\in\mathcal{C}_k} \frac{D_i}{\hat{D}_k} \mathbf{w}_{i}^{t}$ and broadcast it to all clients in this group;}
            \FOR{each client $i\in \mathcal{C}_k$}
                \STATE{Perform local training according to \eqref{eq:sgd} and upload the model gradient $\mathbf{g}_i^t$ to the server;}
            \ENDFOR
            \STATE{The server updates the global model as $\hat{\mathbf{w}}_{k}^{t+1} = \hat{\mathbf{w}}_{k}^{t} - \eta^t \sum_{i\in\mathcal{C}_k} \frac{D_i}{\hat{D}_k} \mathbf{g}_i^t$ and broadcasts it to clients in $\mathcal{C}_k$;}
        \ENDIF
    \ENDFOR
\ENDFOR
\RETURN{$\mathbf{w}_{i}^{T}, \forall i\in\mathcal{N};$}
\end{algorithmic}
\end{algorithm}

\begin{algorithm}
\caption{Client Partition Function in HCCT}\label{algorithm:partition}
\begin{algorithmic}[1]
    \STATE{\textbf{def} \textit{ClientPartition}($\{\mathbf{g}_i^t\}_{i\in\mathcal{N}}, \{\mathcal{C}_k\}_{k=1}^{K}$):}
    \STATE{Initialize $K=N$ groups $\mathcal{C}_k = \{k\}$, and set $StopFlag$ as \textit{False}.}
    \WHILE{$StopFlag$ is \textit{False}}
        \FOR{group $k_1 = 1, 2,\dots, K-1$}
            \FOR{group $k_2 = k_1+1, 2,\dots, K$}
                \IF{$B(k_1,k_2)$ is not recorded}
                \STATE{Compute the averaged gradient $\mathbf{g}_{k^{\prime}}^{t}$ and dataset size $\hat{D}_{k^{\prime}}$ assuming groups $k_1$ and $k_2$ merge as group $k^\prime$;}
                \STATE{Record the benefit $B(k_1,k_2)$ as \eqref{eq:benefit};}
                \ENDIF
            \ENDFOR
        \ENDFOR
        \STATE{Find $k_1^*$ and $k_2^*$ that achieves the maximal utility according to \eqref{eq:max};}
        \STATE{Update the new group as $\mathcal{C}_{k_1^*} \leftarrow \mathcal{C}_{k_1^*} \cup \mathcal{C}_{k_2^*}$, remove group index $k_2^*$, and update the number of groups as $K\leftarrow K-1$;}
        \STATE{\textit{// Evaluate the stopping criterion}}
        \IF{$K\!=\!1$ or $B(k_1,k_2) \!\leq\! 0, \forall k_1,k_2 \!\in\! [K], k_1 \!\neq\! k_2$}
            \STATE{Set $StopFlag$ as \textit{True};}
        \ENDIF
    \ENDWHILE
\RETURN{$\{\mathcal{C}_k\}_{k=1}^{K}$;}
\end{algorithmic}
\end{algorithm}

We begin with $K=N$ groups each of which involves a single client.
In this case, the utility of each group equals the utility of its client, i.e.,
\begin{equation}
    \hat{U}_{k} = U_i \mathbb{I}\{i=k\}, \forall k\in[K],
\end{equation}
where $\mathbb{I}\{\cdot\}$ is the binary indicator function.
To improve the client utility, we need to decide whether to allow a coalition of any two groups.
Suppose groups $k_1$ and $k_2$ are merged as a new group $k^{\prime}$, i.e., $\mathcal{C}_{k^{\prime}} = \mathcal{C}_{k_1} \cup \mathcal{C}_{k_2}$, and thus the global gradient of group $k^{\prime}$ is a weighted average of the gradients from all clients in this group, i.e.,
\begin{equation}
    \hat{\mathbf{g}}_{k^{\prime}}^{t} = \sum_{i\in \mathcal{C}_{k^{\prime}}} \frac{D_i}{\hat{D}_{k^{\prime}}} \mathbf{g}_{i}^{t}.
\end{equation}
where $\hat{D}_{k^{\prime}} = \sum_{j\in \mathcal{C}_{k^{\prime}}} D_j$ is the number of training data samples of clients in $\mathcal{C}_{k^{\prime}}$.
For these clients in group $k^{\prime}$, we recompute their utilities and sum them up as the utility of this group, i.e.,
\begin{equation}
    \hat{U}_{k^{\prime}} = \sum_{i\in \mathcal{C}_{k^{\prime}}} U_i(\hat{D}_{k^{\prime}}, \hat{\mathbf{g}}_{k^{\prime}}^{t}).
\end{equation}
The \emph{benefit} of this merging step is defined as the difference of utilities after and before merging groups $k_1$ and $k_2$, i.e.,
\begin{equation}
    B(k_1,k_2) = \hat{U}_{k^{\prime}} - \hat{U}_{k_1} - \hat{U}_{k_2}.
    \label{eq:benefit}
\end{equation}
We perform the above operations iteratively on any two groups $k_1 \neq k_2, k_1,k_2\in [K]$ and record the corresponding benefits.
Subsequently, we merge two groups $k_1^*$ and $k_2^*$ that yield the maximal benefit, i.e.,
\begin{equation}
    (k_1^*, k_2^*) = {\arg\max}_{\{(k_1,k_2)|k_1 \neq k_2, k_1,k_2\in[K] \}} B(k_1,k_2).
    \label{eq:max}
\end{equation}
After this step, the optimization objective in (P1), i.e., the total utility of all clients, is increased since the utility of other groups remains unchanged.
In other words, the partitioning choice is optimal at the current stage. Now we have $K = N-1$ groups and repeat the above operations.

It is worth noting that there is no need to select beforehand the number of groups $K$.
In every partitioning step, the value of $K$ will decrease by one automatically.
Thus, the number of groups is implicitly determined by the stopping criterion.
Specifically, the process of client partition is terminated if there is only one group left, i.e., $K=1$, or there is no benefit in forming collaborations between any two groups, i.e.,
\begin{equation}
    B(k_1,k_2) \leq 0, \forall k_1,k_2\in [K], k_1 \neq k_2.
\end{equation}
Through multiple steps, HCCT builds a hierarchical representation of clients and finalizes the grouping among them.
The process of client partition is summarized in Algorithm \ref{algorithm:partition}, and we show an example with six clients in Fig. \ref{fig:demo}.

\subsection{Discussions}
We note that the above partition operations in Algorithm \ref{algorithm:partition} are carried out by the server and do not incur any additional cost for clients.
Moreover, HCCT is independent of group number determination or group center initialization, which is different from the traditional clustering methods for FL \cite{ifca,li2021soft}.
These advantages avoid the non-trivial parameter tuning process for different training tasks. 

In some scenarios, the result of client partition in HCCT can degenerate into the following collaboration patterns:
\begin{itemize}
    \item \textbf{Independent training}: each client performs independent learning without any cooperation with others, i.e., $K=N$ and $\mathcal{C}_i=\{i\}, \forall i\in\mathcal{N}$. Typical scenarios include: 1) clients have sufficient local training data, and 2) the data distributions diverge significantly among clients.
    \item \textbf{Global training} (conventional FL): all clients work together to train a global model, i.e., $K=1$ and $\mathcal{C}_1=\mathcal{N}$. Typical scenarios include: 1) clients have a very limited volume of training data, and 2) the data distributions are very similar or even IID among clients.
\end{itemize}
As will be verified via simulations, the proposed partition algorithm adapts to various scenarios by particularizing to the above collaboration patterns.

\textbf{Connection to personalized FL.}
The proposed HCCT scheme is orthogonal to the personalization techniques in FL \cite{huang2021personalized,yu2020salvaging,tan2022towards,arivazhagan2019federated,fallah2020personalized}.
Specifically, each group in HCCT can be viewed as ``a micro FL system'' that contains a set of clients.
Thus, we can employ any personalization technique to enhance the clients' performance in a group.
For example, the clients may only collaborate to train the feature extractor of a model and keep the classifier as a personalization layer locally.
This variant, named \textbf{HCCT-P} is advantageous when clients' data are more diverse, as incorporating personalized layers can enhance client-specific model accuracy. As such, HCCT-P is tailored to scenarios characterized by a high degree of non-IID data among clients.
In Section \ref{sec:compatibility}, we demonstrate the compatibility of HCCT with personalization techniques via simulations.

\textbf{Computational complexity.}
The proposed HCCT scheme incurs only the computation cost on the server without bringing any additional overhead to clients.
Specifically, the server needs to compute client similarity to evaluate the client's utility.
The computational complexity of client similarity is related to the number of clients and the number of parameters in the gradient, as detailed in the following.
On one hand, the first step of the client partition function requires computing gradient similarity between two clients, which is repeated for $\frac{1}{2}N(N-1)$ times.
In the best case, when there is no benefit for merging any two groups, the computational cost is $\mathcal{O}(N(N-1))$.
Otherwise, in each step $2\leq l\leq N$, there is a newly generated group, which consists of at most $l+1$ clients, and other $N-l-1$ groups.
Since we iterate over other groups and compute the benefits if it is merged with this new group, this process results in a computational cost of $\mathcal{O}(N(l+2)-(l+1)^2)$.
In the worst case, where we continuously put clients into one single group until it becomes a global training, the computational complexity is $\mathcal{O}(\frac{1}{6}(N^3-19N+6))$.
It is worth noting that the number of clients $N$ in cross-silo FL is rather small \cite{kairouz2021advances}, leading to an acceptable cost on the server.
On the other hand, since the gradient similarity is computed as the vector multiplication between two $M$-dimensional gradients, it introduces a computational complexity of $\mathcal{O}(M)$.

\subsection{Efficient Estimation of Client Similarity}\label{sec:efficient}
Computing client utility requires repeatedly evaluating client similarity $s\left( i, G_i \right)$.
Despite that this operation is performed on the central server, it still introduces high computational complexity of $\mathcal{O}(M)$ due to the vector multiplication between two $M$-dimensional gradients.
To reduce such complexity, we propose an efficient implementation by finding the most important layer accounting for client similarity.
After the first training iteration, the server collects the gradients from all clients.
The gradient of client $i$ can be expressed in a layer-wise manner as follows:
\begin{equation}
    \mathbf{g}_{i}^t = (\mathbf{g}_{i}^t[1], \mathbf{g}_{i}^t[2],\dots,\mathbf{g}_{i}^t[l],\dots,\mathbf{g}_{i}^t[L]),
\end{equation}
where $\mathbf{g}_{i}^t[l]$ denotes the parameters of the $l$-th layer.
Then the server finds the layer with the largest relative variance, i.e.,
\begin{equation}
    l^* = \arg\max_{l\in[L]} \frac{Var(\mathbf{g}_{1}^t[l],\mathbf{g}_{2}^t[l],\dots,\mathbf{g}_{N}^t[l])}{Mean(\mathbf{g}_{1}^t[l],\mathbf{g}_{2}^t[l],\dots,\mathbf{g}_{N}^t[l])}.
\end{equation}
Intuitively, the difference between clients can be identified by solely using layer $l^*$.
In the client partition process, the client similarity is efficiently computed as:
\begin{equation}
    s\left( i, G_i \right) = \cos \langle \mathbf{g}_i^t[l^*], \mathbf{g}_{G_i}^t[l^*] \rangle.
\end{equation}
This efficient implementation reduces the computational complexity from $\mathcal{O}(M)$ to $\mathcal{O}(dim(\mathbf{g}_i^t[l^*]))$.

To summarize, HCCT-E is an efficient implementation of HCCT, designed to reduce the computational complexity in the client partitioning process. This is particularly advantageous in environments where clients have similar data distributions and dataset sizes, as a coarse partition of clients is sufficient.

\section{Convergence Analysis}\label{sec:theory}

In this section, we analyze the proposed HCCT scheme to prove its convergence and observe the effect of client similarity.
To begin with, we make some commonly adopted assumptions on the local loss functions \cite{chen2020convergence,convergence1,convergence2,lin2022coexisting}.
It is worth noting that we do not assume any convexity of loss functions in this section. In other words, the following results hold for general non-convex loss functions.
For simplicity, we denote the local empirical loss function of client $i$ by:
\begin{equation}
    J_i(\mathbf{w}) \triangleq \frac{1}{D_i} \sum_{(\mathbf{x}, y)\in\mathcal{D}_i^{tr}} l( h((\mathbf{x};\mathbf{w}),y).
\end{equation}

\begin{assumption}\label{smooth}
($L$-smoothness)
There exists a constant $L>0$ such that for any $\mathbf{w}_1,\mathbf{w}_2\in\mathbb{R}^{M}$, we have:
\begin{equation}
    \| \nabla J_i(\mathbf{w}_1) - \nabla J_i(\mathbf{w}_2) \|_2 \leq L \| \mathbf{w}_1 - \mathbf{w}_2 \|_2, \forall i\in\mathcal{N}.
\end{equation}
\end{assumption}

\begin{assumption}\label{sgd}
(Unbiased and variance-bounded gradient)
On client $i$, the stochastic gradient computed on a random batch of data samples $\mathcal{B}$ is an unbiased estimate of the full-batch gradient over local training data $\mathcal{D}_i^{tr}$, i.e., 
\begin{equation}
    \mathbb{E} \left[ \frac{1}{|\mathcal{B}|} \sum_{(\mathbf{x}, y)\in\mathcal{B}} \nabla_{\mathbf{w}} l( h((\mathbf{x};\mathbf{w}),y)) \right] = \nabla J_i(\mathbf{w}), \forall i\in\mathcal{N}.
\end{equation}
Besides, there exists a constant $\sigma>0$ such that 
\begin{equation}
    \mathbb{E} \left[ \left\| \frac{1}{|\mathcal{B}|}\sum_{(\mathbf{x}, y)\in\mathcal{B}} \nabla_{\mathbf{w}} l( h((\mathbf{x};\mathbf{w}),y)) - \nabla J_i(\mathbf{w}) \right\|_2^2 \right] \leq \sigma^2, \forall i\in\mathcal{N}.
\end{equation}
\end{assumption}

The data distributions on clients vary among each other, as explained in Section \ref{sec:3A}, which leads to different optimization objectives and diverse gradients.
To show such data heterogeneity, previous works \cite{wang2022unified,sun2021semi} made assumptions on the difference between local and global training objectives.
Different from these works, we aim to investigate the collaboration among clients and thus make a more precise assumption on the gradient similarity among clients as follows.

\begin{assumption}\label{noniid}
(Gradient similarity)
There exist constants $\kappa_{i,j}>0$ such that
\begin{equation}
    \| \nabla J_i(\mathbf{w})- \nabla J_j(\mathbf{w}) \|_2 \leq \kappa_{i,j}, \forall i, j\in\mathcal{N}.
\end{equation}
\end{assumption}

We are now to derive the convergence of the HCCT in the following theorem.
Previous works in FL \cite{wang2021cooperative,mimic,po_fl} consider that an algorithm has reached convergence if it converges to a stationary point of the global loss function, i.e., if its expected squared gradient norm $\min_{t\in[T]} \mathbb{E}[\| \nabla J_i(\mathbf{w}) \|^2 ]$ is zero \cite{bottou2018optimization}.
Compared with these works, the clients in cross-silo FL have local training objectives and are not concerned about the global objective. 
Therefore, a better option is to verify the convergence of the algorithm by upper bounding the sum of squared norm of local gradients $\sum_{i\in\mathcal{N}} \mathbb{E}[\| \nabla J_i(\mathbf{w}_i^{t}) \|^2 ]$ \cite{partial}.

\begin{theorem}\label{thm:convergence}
Let $\xi_T=\sum_{t=0}^{T-1} \eta^t$. Consider client $i$ joins in group $G_i^t$ at epoch $t$.
With Assumptions \ref{smooth}-\ref{noniid}, if the learning rates satisfy $ \eta^t < \frac{1}{L}, \forall t\in[T]$, we have:
\begin{align}
    & \frac{1}{\xi_T} \sum_{t=0}^{T-1} \eta^t \sum_{i\in\mathcal{N}} \mathbb{E}[\| \nabla J_i(\mathbf{w}_i^{t}) \|^2 ] \nonumber\\
    \leq & \frac{2}{\xi_T} \sum_{i\in\mathcal{N}} \left(\mathbb{E}[J_i(\mathbf{w}_i^{0})] -  \mathbb{E}[J_i(\mathbf{w}_i^{*})]\right) \nonumber\\
    & + \frac{1}{\xi_T} \sum_{t=0}^{T-1} 2\eta^t \sum_{i\in\mathcal{N}} \sum_{j\in G_i^t} a_j^t \kappa_{i,j}^2 \nonumber\\
    & + \frac{1}{\xi_T} \sum_{t=0}^{T-1} 2 \eta^t L^2 \sum_{s=0}^{t-1} (\eta^s)^2 \sum_{i\in\mathcal{N}} \sum_{z\in\mathcal{C}_i^s} a_z^s \kappa_{i,z}^2 \nonumber\\
    & + \frac{1}{\xi_T} \sum_{t=0}^{T-1} L (\eta^t)^2 N \sigma^2 + \sum_{t=0}^{T-1} 4 \eta^t L^2 N\sum_{s=0}^{t-1} (\eta^s)^2 \sigma^2.
    \label{eq:thm2}
\end{align}
\end{theorem}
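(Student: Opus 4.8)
The plan is to follow the standard descent-lemma template for FL convergence, but tracking each client's local model trajectory separately rather than a single global model, since in HCCT different clients may live in different groups and the groups change each epoch. First I would apply $L$-smoothness (Assumption \ref{smooth}) to $J_i$ between consecutive epoch iterates $\mathbf{w}_i^{t}$ and $\mathbf{w}_i^{t+1}$. Here the key observation is that $\mathbf{w}_i^{t+1}$ is obtained from $\mathbf{w}_i^{t}$ (or from the group-averaged model $\hat{\mathbf{w}}_{G_i^t}^{t}$) by subtracting $\eta^t$ times the aggregated group gradient $\hat{\mathbf{g}}_{G_i^t}^{t} = \sum_{j\in\mathcal{C}_{G_i^t}} a_j^t \mathbf{g}_j^t$ with weights $a_j^t = D_j/\hat{D}_{G_i^t}$. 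The descent inequality then reads, schematically,
\begin{align}
\mathbb{E}[J_i(\mathbf{w}_i^{t+1})] \leq \mathbb{E}[J_i(\mathbf{w}_i^{t})] - \eta^t \langle \nabla J_i(\mathbf{w}_i^t), \mathbb{E}[\hat{\mathbf{g}}_{G_i^t}^t] \rangle + \frac{L(\eta^t)^2}{2}\mathbb{E}\|\hat{\mathbf{g}}_{G_i^t}^t\|^2. \nonumber
\end{align}

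Next I would control the cross term and the second-moment term. For the inner product, I would write $\mathbb{E}[\hat{\mathbf{g}}_{G_i^t}^t] = \sum_{j} a_j^t \mathbb{E}[\nabla J_j(\text{local iterates of } j)]$ and use the decomposition $\nabla J_j = \nabla J_i + (\nabla J_j - \nabla J_i)$; Assumption \ref{noniid} bounds $\|\nabla J_j(\mathbf{w}) - \nabla J_i(\mathbf{w})\|$ by $\kappa_{i,j}$, which produces the $\sum_{j\in G_i^t} a_j^t \kappa_{i,j}^2$ terms after using $-\langle a, b\rangle \le \tfrac12\|a\|^2 + \tfrac12\|b\|^2$ type inequalities and absorbing a $\tfrac12\eta^t\|\nabla J_i\|^2$ term on the right to the left. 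The additional subtlety is that $\mathbf{g}_j^t$ is computed along client $j$'s $Q$ local SGD steps starting from a group model, not exactly at $\mathbf{w}_i^t$; I would bound this "client drift" by repeatedly applying $L$-smoothness along the local trajectory, which with the learning-rate condition $\eta^t < 1/L$ gives a telescoping/geometric bound contributing the $L^2 \sum_{s=0}^{t-1}(\eta^s)^2$ terms involving both $\kappa_{i,z}^2$ and $\sigma^2$ (the latter from Assumption \ref{sgd} bounding the stochastic-gradient variance, summed over the $N$ clients). The $L(\eta^t)^2 N\sigma^2$ term is the direct contribution of the current-epoch SGD noise to the second moment.

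Then I would sum the per-client descent inequality over all $i\in\mathcal{N}$ and telescope over $t=0,\dots,T-1$. The telescoping of $\sum_i(\mathbb{E}[J_i(\mathbf{w}_i^t)] - \mathbb{E}[J_i(\mathbf{w}_i^{t+1})])$ collapses to $\sum_i(\mathbb{E}[J_i(\mathbf{w}_i^0)] - \mathbb{E}[J_i(\mathbf{w}_i^T)])$, and lower bounding $J_i(\mathbf{w}_i^T) \ge J_i(\mathbf{w}_i^*)$ (the minimum of $J_i$, which exists under $L$-smoothness with bounded-below loss, an implicit standing assumption) gives the first term on the right-hand side of \eqref{eq:thm2} up to the factor $2$ coming from the $\tfrac12\|\nabla J_i\|^2$ absorption. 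Dividing through by $\xi_T = \sum_t \eta^t$ yields the weighted-average bound as stated. Throughout, I would interpret $\mathcal{C}_i^s$ and $G_i^t$ as client $i$'s group at the indicated epoch and note that the bound is valid regardless of how HCCT chose those groups, since the argument never uses optimality of the partition — only that $a_j^t \ge 0$ and $\sum_{j\in\mathcal{C}_{G_i^t}} a_j^t = 1$.

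The main obstacle I expect is the bookkeeping of the client-drift term: carefully bounding $\|\mathbf{w}_j^{t,q} - \mathbf{w}_i^t\|$ across the $Q$ inner SGD steps while the group-averaging mixes in contributions from other clients in $\mathcal{C}_{G_i^t}$, and making the double sum $\sum_{t}\sum_{s<t}$ close correctly so that the historical terms appear with exactly the coefficients $2\eta^t L^2(\eta^s)^2$ and $4\eta^t L^2(\eta^s)^2$ in \eqref{eq:thm2}. This requires being careful that the drift at epoch $t$ depends on the full history of learning rates and gradient dissimilarities, and that summing these nested dependencies does not produce an extra factor that blows up; the condition $\eta^t < 1/L$ is what keeps the per-step amplification factor below one and makes the geometric bookkeeping tractable.
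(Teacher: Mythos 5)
Your overall skeleton (per-client descent lemma via $L$-smoothness, bounding the cross term with the gradient-similarity constants $\kappa_{i,j}$, telescoping and dividing by $\xi_T$, with the factor $2$ from absorbing $\tfrac{\eta^t}{2}\|\nabla J_i\|^2$) matches the paper's proof. However, there is a genuine gap at exactly the step you yourself flag as the main obstacle: how the historical terms $\sum_{s=0}^{t-1}(\eta^s)^2\big(\sum_{z\in\mathcal{C}_i^s}a_z^s\kappa_{i,z}^2\big)$ and $\sum_{s=0}^{t-1}(\eta^s)^2\sigma^2$ arise. You attribute them to ``client drift'' over the $Q$ local SGD steps within an epoch, controlled by ``repeatedly applying $L$-smoothness along the local trajectory'' with $\eta^t<1/L$ keeping a ``per-step amplification factor below one.'' That mechanism cannot produce the claimed structure: a within-epoch drift bound at epoch $t$ would involve only $\eta^t$ and $Q$, not a sum over all previous epochs $s<t$; and smoothness alone gives the map $\mathbf{w}\mapsto\mathbf{w}-\eta\nabla J(\mathbf{w})$ a Lipschitz constant $1+\eta L>1$, so without convexity there is no contraction, and chaining such factors across epochs produces precisely the geometric blow-up you worry about. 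The condition $\eta^t<1/L$ is not used for any contraction in the paper; it is used only to make the coefficient $-(\tfrac{\eta^t}{2}-\tfrac{L(\eta^t)^2}{2})$ of $\mathbb{E}\big[\|\sum_{j\in G_i^t}a_j^t\nabla J_j(\mathbf{w}_j^t)\|^2\big]$ nonpositive so that term can be dropped.

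The missing idea is the paper's treatment of the cross-client model divergence. After the inner-product bound, one is left with $\mathbb{E}\big[\|\nabla J_i(\mathbf{w}_i^{t})-\nabla J_i(\mathbf{w}_j^{t})\|^2\big]\le L^2\,\mathbb{E}\big[\|\mathbf{w}_i^{t}-\mathbf{w}_j^{t}\|^2\big]$, i.e., the divergence between the models of two \emph{different clients} at epoch $t$ (they may have followed different groups in the past). The paper bounds this by writing both iterates from the \emph{common initialization} $\mathbf{w}^{0}$, so that $\mathbf{w}_i^{t}-\mathbf{w}_j^{t}$ equals the difference of accumulated updates $\sum_{s=0}^{t-1}\eta^s\big(\sum_{z\in\mathcal{C}_j^s}a_z^s\mathbf{g}_z^s-\sum_{z\in\mathcal{C}_i^s}a_z^s\mathbf{g}_z^s\big)$; each epoch-$s$ term is then bounded directly using Assumption 2 (variance, giving the $2\sigma^2$ contribution) and Assumption 3 applied at whatever points the gradients are evaluated — crucially, the gradient-similarity bound is uniform in $\mathbf{w}$, so no smoothness-based comparison at a common point (and hence no contraction argument) is ever needed. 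This is what yields the coefficients $2\eta^tL^2(\eta^s)^2$ and $4\eta^tL^2(\eta^s)^2$ after telescoping and dividing by $\sum_t\eta^t/2$. Without this common-initialization expansion, your plan does not close; with it, the bookkeeping you fear is straightforward.
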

\begin{proof}
    We first prove an upper bound for the local loss decay of all clients at training epoch $t$ (formally stated in Lemma 1 in Appendix B), which is given by:
    \begin{align}
    & \sum_{i\in\mathcal{N}} \mathbb{E}[J_i(\mathbf{w}_i^{t+1})] - \sum_{i\in\mathcal{N}} \mathbb{E}[J_i(\mathbf{w}_i^{t})] \nonumber \\
    \leq & - \eta^t \sum_{i\in\mathcal{N}} \mathbb{E}\langle \nabla J_i(\mathbf{w}_i^t), \sum_{j\in G_i^t} a_j^t \nabla J_j(\mathbf{w}_j^t) \rangle \nonumber \\
    & + \frac{L (\eta^t)^2}{2} \sum_{i\in\mathcal{N}} \mathbb{E} \left[\left\| \sum_{j\in G_i^t} a_j^t \nabla J_j(\mathbf{w}_j^t) \right\|^2\right] + \frac{L (\eta^t)^2 N}{2} \sigma^2.
    \label{eq:lemma1-main}
    \end{align}
    
    In the right-hand side (RHS) of \eqref{eq:lemma1-main}, the inner product $- \eta^t \mathbb{E} \langle \nabla J_i(\mathbf{w}_i^{t}), \sum_{j\in G_i^t} a_j^t \nabla J_j(\mathbf{w}_j^{t}) \rangle$ needs to be further upper bounded.
    The main challenge is to bound the divergence between any two local gradients, i.e., $\mathbb{E}[\| \nabla J_i(\mathbf{w}_i^{t}) - \nabla J_i(\mathbf{w}_j^{t}) \|^2 ]$. Using $L$-smoothness in Assumption \ref{smooth}, we show that:
    \begin{equation}
        \mathbb{E}[\| \nabla J_i(\mathbf{w}_i^{t}) - \nabla J_i(\mathbf{w}_j^{t}) \|^2 ] \leq L^2 \mathbb{E}[\| \mathbf{w}_i^{t} - \mathbf{w}_j^{t} \|^2 ].
    \end{equation}
    Then we observe that, despite evolving from different groups, $\mathbf{w}_j^{t}$ and $\mathbf{w}_i^{t}$ have the same initialization $\mathbf{w}^{0}$. Therefore, we can upper bound their difference by comparing the accumulated gradients in previous $t-1$ iterations.
    
    Finally, the result is completed by summing up both sides of an upper bound of (27) over index $t=0,1,\dots,T-1$ and dividing them by $\sum_{t=0}^{T-1} \frac{\eta^t}{2}$.
    The detailed proof is deferred to Appendix C.
\end{proof}

\begin{corollary}\label{corollary}
    When the learning rates satisfy $\lim_{T\rightarrow \infty} \sum_{t=0}^{T-1} \eta^t = \infty$, $\lim_{T\rightarrow \infty} \sum_{t=0}^{T-1} (\eta^t)^{2} < \infty$, and $\eta^t=\mathcal{O} \left( (\sum_{i\in\mathcal{N}} \sum_{j\in G_i^t} a_j^t \kappa_{i,j}^2)^{1/p} \right)$ with $p>0$, the RHS of \eqref{eq:thm2} converges to zero as $T\rightarrow \infty$, i.e., the local models output by HCCT converge to the stationary points of the local loss functions.
\end{corollary}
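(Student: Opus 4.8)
The plan is to show that each of the five summands on the right-hand side of \eqref{eq:thm2} tends to $0$ as $T\to\infty$, using two facts that follow at once from the hypotheses: $\xi_T=\sum_{t=0}^{T-1}\eta^t\to\infty$, and $\sum_{t=0}^{\infty}(\eta^t)^2<\infty$, the latter forcing $\eta^t\to 0$. The recurring tool is a discrete L'H\^opital (Stolz--Ces\`aro) argument: for any nonnegative sequence $\{c^t\}$ with $c^t\to 0$, one has $\frac{1}{\xi_T}\sum_{t=0}^{T-1}\eta^t c^t\to 0$, since the incremental ratio equals $c^{T-1}\to 0$ while $\xi_T$ diverges.

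First the ``easy'' terms. The initial-gap term $\frac{2}{\xi_T}\sum_{i\in\mathcal{N}}\big(\mathbb{E}[J_i(\mathbf{w}_i^0)]-\mathbb{E}[J_i(\mathbf{w}_i^*)]\big)$ has a fixed finite numerator (each $J_i$ attains a finite infimum at $\mathbf{w}_i^*$) over $\xi_T\to\infty$, hence vanishes. The single-step noise term $\frac{1}{\xi_T}\sum_{t=0}^{T-1}L(\eta^t)^2N\sigma^2=\frac{LN\sigma^2}{\xi_T}\sum_{t=0}^{T-1}(\eta^t)^2$ vanishes because its numerator converges to the finite constant $LN\sigma^2\sum_t(\eta^t)^2$ while the denominator diverges.

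Next the heterogeneity terms $\frac{1}{\xi_T}\sum_{t}2\eta^t\sum_{i}\sum_{j\in G_i^t}a_j^t\kappa_{i,j}^2$ and its nested counterpart inside the third summand. I would first use $\sum_{j\in G_i^t}a_j^t=1$ to write $H^t:=\sum_{i\in\mathcal{N}}\sum_{j\in G_i^t}a_j^t\kappa_{i,j}^2\le N\kappa^2$ with $\kappa^2:=\max_{i,j}\kappa_{i,j}^2$; the claim for the first of these is exactly $\frac{1}{\xi_T}\sum_{t}\eta^t H^t\to 0$. This is precisely where the extra condition $\eta^t=\mathcal{O}((H^t)^{1/p})$ is meant to bite: it ties the step size to the within-group gradient dissimilarity so that $\eta^t H^t$ is of strictly smaller order than $\eta^t$, reducing the claim to the Stolz--Ces\`aro statement above. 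The nested heterogeneity term in the third summand is then handled by swapping the order of summation, $\sum_{t}\eta^t\sum_{s<t}(\eta^s)^2=\sum_{s}(\eta^s)^2\sum_{t>s}\eta^t$, and combining the same coupling with $\sum_s(\eta^s)^2<\infty$.

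The step I expect to be the main obstacle is the accounting of the ``accumulated-noise'' contributions, namely the last summand $\frac{1}{\xi_T}\sum_{t}4\eta^t L^2N\sum_{s<t}(\eta^s)^2\sigma^2$ and the $\kappa$-free part of the third: the crude bound $\sum_{s<t}(\eta^s)^2\le\sum_s(\eta^s)^2<\infty$ is too lossy, as it only gives an $\mathcal{O}(1)$ bound rather than a vanishing one, so one must carry through faithfully the $\tfrac{1}{\xi_T}$-normalization inherited from \eqref{eq:lemma1-main} and exploit that $\eta^t\to 0$ makes the incremental contribution $\eta^t\sum_{s<t}(\eta^s)^2$ negligible relative to $\xi_T$ (equivalently, that the stated learning-rate schedule provides $\eta^t\,\xi_T\to 0$), after which a summation-order swap plus Stolz--Ces\`aro closes it. Once all five pieces vanish, the left-hand side $\frac{1}{\xi_T}\sum_t\eta^t\sum_i\mathbb{E}[\|\nabla J_i(\mathbf{w}_i^t)\|^2]\to 0$, which is the asserted convergence to stationary points of the local objectives.
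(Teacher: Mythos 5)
Your overall strategy coincides with the paper's own proof (Appendix D): split the right-hand side of \eqref{eq:thm2} into the initial-gap term, the two heterogeneity terms, and the two noise terms, dispose of the initial gap and the single-step noise via $\xi_T\to\infty$ together with $\sum_{t}(\eta^t)^2<\infty$, and invoke the coupling condition $\eta^t=\mathcal{O}\big((\sum_{i\in\mathcal{N}}\sum_{j\in G_i^t}a_j^t\kappa_{i,j}^2)^{1/p}\big)$ plus a weighted Ces\`aro/Stolz--Ces\`aro argument for the first heterogeneity term; the paper treats those pieces in the same way, and with the same looseness about what the big-O coupling literally delivers (as stated it lower-bounds the dissimilarity in terms of $\eta^t$, whereas both you and the paper use it as if it makes $H^t\triangleq\sum_{i\in\mathcal{N}}\sum_{j\in G_i^t}a_j^t\kappa_{i,j}^2$ effectively vanish along the schedule).

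The genuine gap is exactly where you anticipated trouble, and the repair you sketch does not close it. Write $A_t=\sum_{s=0}^{t-1}(\eta^s)^2$; the accumulated-noise term is (up to constants) $\frac{L^2N\sigma^2}{\xi_T}\sum_{t=0}^{T-1}\eta^t A_t$, i.e.\ a weighted average of $A_t$ with weights $\eta^t/\xi_T$. Since $A_t\uparrow A_\infty=\sum_{s\geq0}(\eta^s)^2$, which is finite but strictly positive for any nontrivial schedule, the very Stolz--Ces\`aro argument you invoke gives the limit $A_\infty$, not $0$: the incremental ratio is $\eta^{T-1}A_{T-1}/\eta^{T-1}=A_{T-1}\to A_\infty$. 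Neither $\eta^t\to0$, nor the condition $\eta^t\xi_T\to0$, nor swapping the order of summation changes this (the swap yields $\sum_s(\eta^s)^2\sum_{t>s}\eta^t\approx A_\infty\,\xi_T$). The same obstruction applies verbatim to the nested heterogeneity term $\frac{1}{\xi_T}\sum_t 2\eta^tL^2\sum_{s<t}(\eta^s)^2\sum_{i\in\mathcal{N}}\sum_{z\in\mathcal{C}_i^s}a_z^s\kappa_{i,z}^2$. Hence, under the stated hypotheses alone, these two terms converge to positive constants proportional to $L^2N\sigma^2\sum_s(\eta^s)^2$ and $L^2\sum_s(\eta^s)^2\,\tilde H^s$ rather than to zero; driving them to zero requires something beyond the corollary's conditions (e.g.\ $\sigma=0$, horizon-dependent step sizes, or accepting a residual floor). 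For what it is worth, the paper's own Appendix D does not supply the missing argument either --- it simply asserts $\lim_{T\to\infty}S_3/\xi_T=\lim_{T\to\infty}S_4/\xi_T=0$ --- so your attempt reproduces the substantiated parts of the paper's proof and stalls precisely at the step the paper leaves unjustified; but as written it does not prove the corollary.
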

\begin{proof}
    Please refer to Appendix D.
\end{proof}

From Theorem \ref{thm:convergence} and Corollary \ref{corollary}, we observe that the convergence of HCCT is hindered by the gradient dissimilarity among clients. Specifically, as more heterogeneous clients are grouped, i.e., $\sum_{j\in G_i^t} a_j^t \kappa_{i,j}^2$ is larger, the convergence speed will be slowed down.
This verifies our intuition that clients with similar gradients should be divided into the same collaboration group.
In the next section, we will show the empirical benefits of the proposed HCCT scheme via extensive simulations.

Note that we adopt a fixed learning rate during a local epoch and time-varying learning rates across (global) training epochs.
This setup follows previous studies on FL e.g., \cite{fedavg,wang2022unified}, and studies on local SGD, e.g., \cite{gu2022why,guohybrid}.
As we focus on comparing the impact of data distributions and dataset sizes, we adopt the simple but classic SGD optimizer as the client optimizer and leave the investigation of HCCT with varying local learning rates as future work.

\section{Simulation Results}\label{sec:simulation}

\subsection{Setup}\label{sec:setup}

We simulate a cross-silo FL system with one central server and $N$ clients. To comprehensively demonstrate the effect of various client data distributions, we simulate three scenarios with different training tasks:
\begin{itemize}
    \item \textbf{Digit} \cite{fedbn}: In this training task, $N=10$ clients collaborate to classify the digit images with labels ranging from $0$ to $9$. The task involves five datasets including SVHN \cite{svhn}, USPS \cite{usps}, SynthDigits \cite{SynthDigitsmnistm}, MNIST-M \cite{SynthDigitsmnistm}, and MNIST \cite{mnist}.
    Each client is assumed to have a set of randomly chosen training data sampled from one of these datasets.
    \item \textbf{FMNIST} \cite{fmnist}: In this training task, $N=20$ clients have IID data extracted from the FMNIST dataset. The number of data samples at each client follows a half-normal distribution $\mathcal{H}(1)$ \cite{duan2020self}.
    \item \textbf{CIFAR-10} \cite{cifar10}: In this training task, $N=10$ clients have heterogeneous and imbalanced data samples. 
    We divide the training data into 100 shards, each of which contains one random class of samples. Then, we randomly allocate a certain number of shards to each client using a half-normal distribution $\mathcal{H}(1)$.
\end{itemize}
For local evaluation, we split the data samples at each client into the training dataset and test dataset randomly.

In the Digit training task, we adopt the same CNN model with three convolutional layers as that of \cite{fedbn}.
For the FMNIST and CIFAR-10 training tasks, we train a fully connected neural network model and a CNN model with four convolutional layers, respectively.

For comparisons, we adopt the following training schemes as baselines:
\begin{itemize}
    \item \textbf{Independent training}: Each client performs local training independently based on the local training data.
    \item \textbf{Global training}: All clients collaboratively train a global model.
    \item \textbf{MAXFL} \cite{cho2022federate}: A client chooses to participate in training the global model only if its local loss is lower than a certain threshold; otherwise it retains independent training. 
    \item \textbf{FedFA}\footnote{This work \cite{huang2020fairness} adopts SGD with momentum as local training algorithm, but for fair comparisons we use mini-batch SGD in all methods.} \cite{huang2020fairness}: All clients train in a global model similar to the global training scheme. The key difference is that their model updates are weighted according to the training accuracy ${A}_i^t$ on local data, i.e., $\hat{\mathbf{g}}_{1}^{t} = \sum_{i\in\mathcal{N}} \frac{-\log_2 {A}_i^t}{ \sum_{j\in\mathcal{N}} -\log_2 {A}_j^t} \mathbf{g}_i^{t}$, which ensures equitable accuracy distribution among clients.
    \item \textbf{IFCA} \cite{ifca}: In every training epoch, each client selects the global model with the minimum loss on its local data from the given number of global models. 
    \item \textbf{FLSC} \cite{li2021soft}: In every training epoch, each client chooses $N_g$ global models with the minimum loss on their local data from multiple global models and then takes an average of these models as its local model.
\end{itemize}
IFCA and FLSC serve as benchmarks for client clustering in FL, while FedFA and MAXFL represent other approaches to determining client collaboration patterns.
To demonstrate the compatibility of HCCT, we also show the simulation results for two variants: HCCT-E (an efficient implementation in Section \ref{sec:efficient}) and HCCT-P (with personalization layers \cite{arivazhagan2019federated}).

We summarize the statistics of the above training tasks and the detailed experimental setup in Table \ref{table:setup}.
We run each experiment with five random seeds and report the average result.

\begin{table}[ht]
\caption{Simulation Setup}
\label{table:setup}
\centering
\resizebox{\columnwidth}{!}{
\begin{tabular}{cccc}
\toprule
                & Digit & FMNIST & CIFAR-10 \\ \midrule
\# of classes                 & 10    & 10       & 10      \\
\# of clients                 & 10    & 20       & 10     \\
Avg. \# of samples/client     &  185       & 120      &  950        \\
Batch size                    &  64     & 64        & 64        \\
Learning rate  $\eta^t$       & $0.1\times 0.995^{t-1}$      & $0.1\times 0.995^{t-1}$         &  $0.1\times 0.995^{t-1}$       \\
Local epochs                  & 5      &  1        & 1        \\
Training epochs               & 20      & 50         & 50        \\
\# of groups (IFCA, FLSC)         & 10      & 5         & 10        \\ 
\# of soft groups $N_g$ (FLSC) & 2      & 3         & 3 \\
$\alpha$ (HCCT)               & $1$    &  $100$       & $1$ \\
$\alpha$ (HCCT-E)               & $100$    &  $100$       & $10$ \\
$\alpha$ (HCCT-P)               & $1$    &  $100$       & $1$ \\ \bottomrule
\end{tabular}
}
\end{table}

\begin{table}[!t]
    \caption{Local test error (\%) in different training tasks. The best performances (except HCCT-P) are highlighted in \textbf{bold}.}
    \label{tab:three-dataset}
    \centering
    \resizebox{\columnwidth}{!}{
    \begin{tabular}{cccc}
        \toprule
         &  Digit&  FMNIST& CIFAR-10\\ \hline
         Indenpendant&  $30.22\pm 0.27$&  $48.38\pm 1.52$& $\mathbf{34.47\pm 4.73}$\\
         Global&  $29.55\pm 1.37$&  $\mathbf{29.98\pm 4.04}$& $46.96\pm 3.72$\\
         MAXFL&  $29.23\pm 1.56$&  $\mathbf{29.98\pm 4.09}$& $41.25\pm 6.99$\\
         FedFA&  $30.03\pm 1.38$&  $34.29\pm 3.88$& $44.54\pm 5.65$\\
         IFCA&  $28.63\pm 2.57$&  $38.84\pm 5.23$& $37.23\pm 2.96$\\
         FLSC&  $34.76\pm 6.26$&  $33.55\pm 4.16$& $41.11\pm 0.50$\\ \hline
        HCCT & $\mathbf{20.06\pm 0.66}$& $\mathbf{29.98\pm 1.37}$&$\mathbf{34.51\pm 4.64}$ \\
        HCCT-E & $24.05\pm 1.87$& $32.02\pm 2.55$&$\mathbf{34.51\pm 4.70}$ \\  \hline
        HCCT-P & $19.85\pm 0.31$& $29.51\pm 2.11$ & $34.37\pm 4.77$ \\ \bottomrule
    \end{tabular}}
\end{table}

\subsection{Performance Comparison}
We first compare the generalization performance under different training schemes in three training tasks.
Table \ref{tab:three-dataset} shows the average local test error of all clients after a given number of training epochs.
We observe that in the Digit training task, HCCT achieves minimal test error among all training schemes.
This is because HCCT finds an appropriate collaboration pattern for all clients which enhances their generalization performance by involving more training data while excluding gradients from clients with dissimilar data distribution.
In comparison, clients in independent training cannot achieve a good generalization performance because of limited training data, while global training suffers from poor convergence speed caused by heterogeneous data.
Moreover, other baseline schemes including MAXFL, FedFA, and FLSC, induce larger test errors at clients, since they cannot identify the intrinsic cluster structure of clients.
In addition, we find that in IFCA all clients may find the same model appealing and tend to join in that group, which degrades its generalization performance. 

In the FMNIST training task, HCCT and global training have similar test errors, and they outperform other baselines.
Given the IID data at clients, a global collaboration incorporates the maximum training data samples and significantly improves the generalization performance of the model.
However, clients in the CIFAR-10 task have heterogeneous data, and thus the global training scheme suffers from such non-IID issues.
In this case, clients should perform independent training, and HCCT also exhibits comparable test error to independent training.
These results evidence that HCCT can identify the most effective client collaboration pattern, including independent training and global training, to accommodate various scenarios.

To further compare different training schemes, we summarize the average, standard deviation, minimal, and maximal values of test errors at clients in the Digit training task in Table \ref{table:digit}.
We see that in addition to achieving the lowest average test error, HCCT also notably reduces the test error of the worst-performing client, which verifies that clients with fewer training data samples can benefit from effective collaboration.
Meanwhile, the generalization performances of those clients with sufficient training data are also preserved, as the proposed utility function excludes clients with dissimilar data from its collaboration group.
Besides, it is worth noting that HCCT achieves a lower standard deviation than the baseline schemes, which shows that HCCT implicitly enhances fairness among clients, leading to similar generalization performance for all clients.

\begin{table}[h]
\caption{Local test error (\%) in the Digit training task. The best performances are highlighted in \textbf{bold}.}
\label{table:digit}
\centering
\begin{tabular}{ccccc}
\toprule
 & Mean ($\downarrow$)& Std. ($\downarrow$)& Min. ($\downarrow$)& Max. ($\downarrow$)\\ \hline
Independent  & $30.22$     &  $18.89$    & $6.13$     & $56.01$     \\
Global       & $29.55$     &  $20.89$    & $4.99$     & $62.27$     \\
MAXFL        & $29.23$ & $20.83$ & $4.88$ & $62.01$ \\
FedFA        & $30.03$ & $21.10$ & $5.21$  & $63.71$ \\
IFCA         & $28.63$     &  $19.13$    & $4.70$     & $60.44$     \\
FLSC         & $34.76$     &  $21.39$    &  $6.49$    &  $67.99$    \\
\hline
HCCT         & $\mathbf{20.06}$     & $\mathbf{13.47}$     &  $\mathbf{4.19}$    & $\mathbf{39.44}$     \\ 
HCCT-E         & $24.05$     & $16.00$     &  $4.67$    & $45.77$     \\ \bottomrule
\end{tabular}
\end{table}

\subsection{Compatibility of HCCT}\label{sec:compatibility}

\textbf{Compatibility with personalization techniques.}
Table \ref{tab:three-dataset} also shows the local test error of HCCT with personalization layers (HCCT-P).
Compared with the original results of HCCT, HCCT-P further reduces the test error of clients.
However, since clients already found a proper collaboration pattern with others, the improvement is relatively limited.
A better combination with existing personalization techniques is left as future works.

\textbf{Flexibility of tackling incoming clients.}
To demonstrate the additional advantage of HCCT, we present the learning curves in Fig. \ref{fig:cifar_new} where 10 clients begin training and 0, 1, and 2 new clients join in training every 10 iterations.
Upon the arrival of a new client, its allocation to a specific group is determined based on the utility evaluation according to \eqref{eq:benefit} and \eqref{eq:max}. This immediate integration of the new client's data into the training process is evident from the observed increase in accuracy. Notably, the integration of additional clients over time contributes to a progressive reduction in the final training error. 
This demonstrates the flexibility and efficiency of our framework in not only integrating new clients but also in utilizing their data to improve the overall model performance.

\begin{figure}[!t]
    \centering
    \includegraphics[width=0.8\columnwidth]{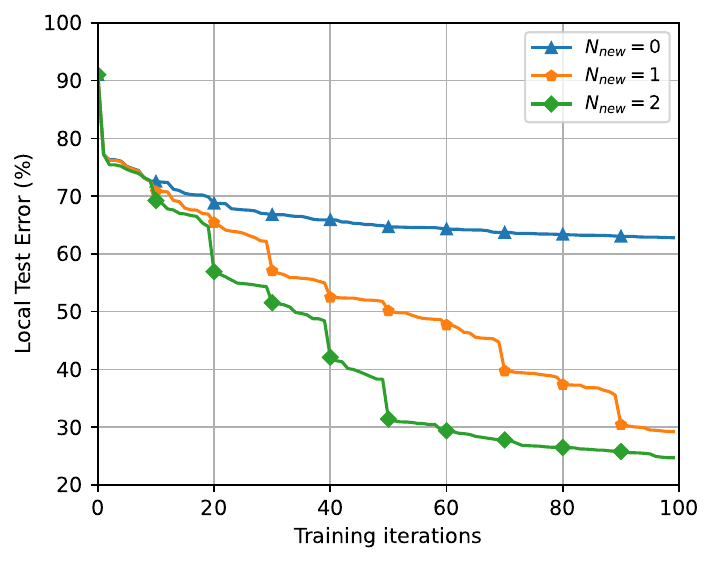}
    \caption{Local test error (mean) vs. training iterations in the CIFAR-10 dataset with $N_{new}$ new clients.}
    \label{fig:cifar_new}
\end{figure}

\subsection{Effect of System Parameters}\label{sec:effect}

\textbf{Effect of $\alpha$.}
In Fig. \ref{fig:cifar_alpha}, we investigate the effect of $\alpha$ using the CIFAR-10 dataset.
A larger value of $\alpha$ emphasizes the importance of data volume, while a small value means that clients tend to collaborate with someone with similar data distribution.
In this study, we split the assigned data for each client into the training data and test data according to different ratios, where a larger ratio means that a client has more training data samples but less test data.
In general, when $\alpha$ is small, its effect on the model performance is rather trivial.
We observe from the results that as the ratio becomes larger, more training data speed up the training process and enhance the generalization performance in all cases.
Moreover, if clients have fewer training data samples (i.e., the ratio is $0.05$ or $0.1$), the optimal value of $\alpha$ is $100$.
It means that clients have limited local data and tend to collaborate with others to benefit from their training data.
In contrast, when clients have many local data samples (i.e., ratio is $0.2$ or $0.5$), they will emphasize more on the data distribution.
Thus, by using a smaller $\alpha$ clients can avoid collaborating with other clients whose data distribution diverges significantly. 
These results provide a guideline for the selection of the value of $\alpha$ in the utility function.
In addition, it is straightforward to extend and apply personalized $\alpha$ for different clients.

\begin{figure}[!t]
    \centering
    \includegraphics[width=0.75\columnwidth]{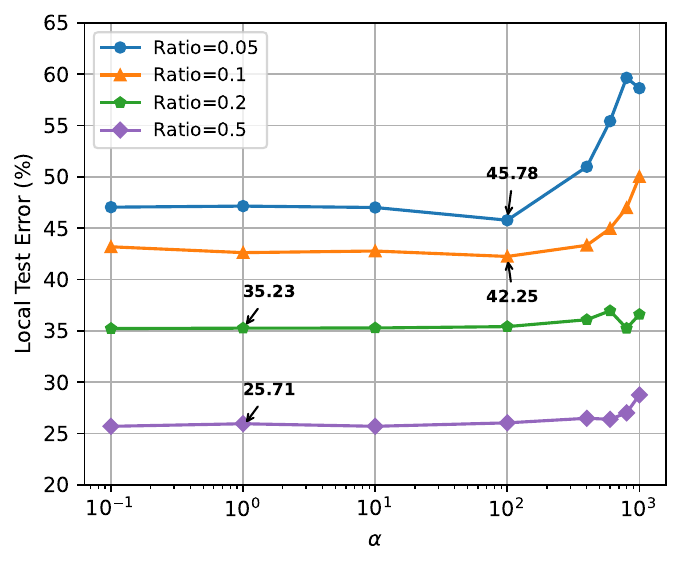}
    \caption{Local test error (mean) in the CIFAR-10 dataset with different values of $\alpha$.}
    \label{fig:cifar_alpha}
\end{figure}

\begin{figure}[!t]
    \centering
    \includegraphics[width=0.8\columnwidth]{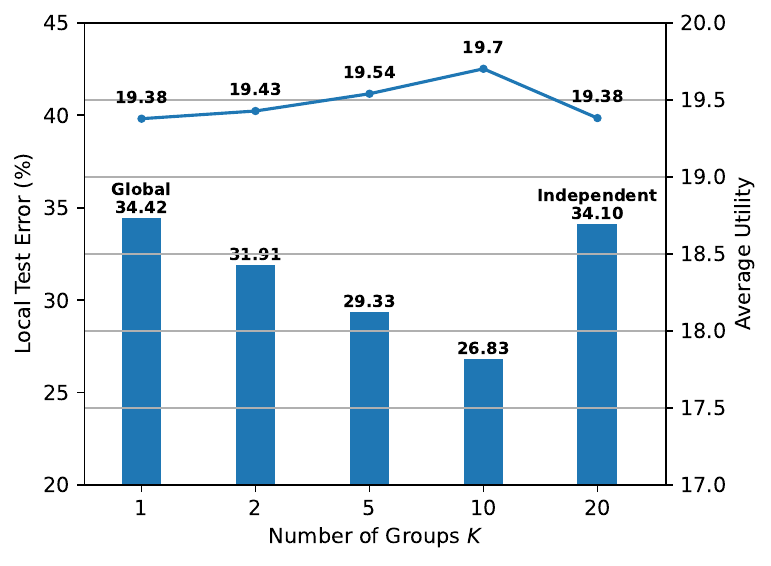}
    \caption{Local test error (mean) and utility in the Digit dataset with different numbers of groups.}
    \label{fig:cluster}
\end{figure}

\textbf{Effect of the number of groups.}
Next, to study how the number of groups affects the generalization performance, we set the number of clients as $N=20$, manually fix the number of groups as $K$, and evaluate the test error in the Digit training task.
From the results in Fig. \ref{fig:cluster}, we observe that in this scenario the optimal number of groups to achieve the minimal test error is $10$.
Notably, the maximal utility is also achieved when clients are divided into $10$ groups.
A plausible explanation is that each group consists of clients sharing the same dataset.
This implies that the utility function can effectively reflect the client’s requirement of improving the generalization performance, such that the minimal test error is achieved when the utility is maximized.

Furthermore, we set the number of clients as $N \in \{10,20,30,40,50\}$, manually fix the number of groups as $K\in\{1,2,5,10,N\}$, and evaluate the test error in the Digit training task.
The results in Fig. \ref{fig:optimal_K} show that the clients prefer collaborating with a limited number of clients instead of joining in a global federation.
\begin{figure}[!t]
    \centering
    \includegraphics[width=0.8\columnwidth]{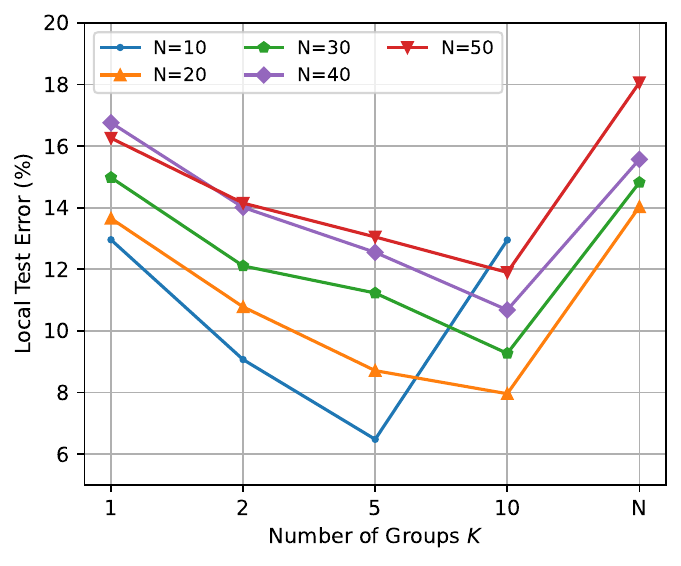}
    \caption{Local test error (mean) in the Digit dataset with different numbers of groups $K$ and clusters $N$.}
    \label{fig:optimal_K}
\end{figure}

\section{Conclusions}\label{sec:conclusion}
In this work, we tackled the challenge of optimizing the client collaboration pattern in order to maximize the generalization performance in cross-silo FL. We derived the generalization bound for clients in various collaboration cases and then formulated the client utility maximization problem. To efficiently solve this problem, we proposed HCCT, a hierarchical clustering-based collaborative training scheme, in which clients are partitioned into different non-overlapping groups without the need to initially decide the number of groups. We also proved the convergence of HCCT for general loss functions. The effectiveness of HCCT is further verified via extensive simulations in different scenarios and datasets.

The group partition function in HCCT requires evaluating client similarity for computing the client utility.
This introduces additional computational costs on the central server, increasing with the number of clients. 
Given this scalability issue, HCCT is limited to the cross-silo FL with fewer clients.
For future work, it is interesting to design the collaboration patterns for cross-device FL \cite{kairouz2021advances,mu2024federated}.
Besides, it is worth exploring personalization techniques in HCCT to further improve the generalization performance.
In addition, adapting HCCT to the case with time-varying local learning rates would be beneficial.

\bibliographystyle{IEEEtran}
\bibliography{ref}

\begin{thebibliography}{10}
\providecommand{\url}[1]{#1}
\csname url@samestyle\endcsname
\providecommand{\newblock}{\relax}
\providecommand{\bibinfo}[2]{#2}
\providecommand{\BIBentrySTDinterwordspacing}{\spaceskip=0pt\relax}
\providecommand{\BIBentryALTinterwordstretchfactor}{4}
\providecommand{\BIBentryALTinterwordspacing}{\spaceskip=\fontdimen2\font plus
\BIBentryALTinterwordstretchfactor\fontdimen3\font minus \fontdimen4\font\relax}
\providecommand{\BIBforeignlanguage}[2]{{%
\expandafter\ifx\csname l@#1\endcsname\relax
\typeout{** WARNING: IEEEtran.bst: No hyphenation pattern has been}%
\typeout{** loaded for the language `#1'. Using the pattern for}%
\typeout{** the default language instead.}%
\else
\language=\csname l@#1\endcsname
\fi
#2}}
\providecommand{\BIBdecl}{\relax}
\BIBdecl

\bibitem{fedavg}
B.~McMahan, E.~Moore, D.~Ramage, S.~Hampson, and B.~A. y~Arcas, ``Communication-efficient learning of deep networks from decentralized data,'' in \emph{Proc. Int. Conf. Artif. Intell. Statist. (AISTATS)}, Ft. Lauderdale, FL, USA, Apr. 2017, pp. 1273--1282.

\bibitem{app1}
S.~Wang, M.~Chen, C.~Yin, W.~Saad, C.~S. Hong, S.~Cui, and H.~V. Poor, ``Federated learning for task and resource allocation in wireless high-altitude balloon networks,'' \emph{IEEE Internet Things J.}, vol.~8, no.~24, pp. 17\,460--17\,475, 2021.

\bibitem{ni1}
W.~Ni, J.~Zheng, and H.~Tian, ``Semi-federated learning for collaborative intelligence in massive iot networks,'' \emph{IEEE Internet Things J.}, vol.~10, no.~13, pp. 11\,942 -- 11\,943, Jul. 2023.

\bibitem{kairouz2021advances}
P.~Kairouz \emph{et~al.}, ``Advances and open problems in federated learning,'' \emph{Found. Trends Mach. Learn.}, vol.~14, no. 1--2, pp. 1--210, 2021.

\bibitem{mimic}
Y.~Sun, Y.~Mao, and J.~Zhang, ``{MimiC}: Combating client dropouts in federated learning by mimicking central updates,'' \emph{IEEE Transactions on mobile computing}, to appear.

\bibitem{huang2022cross}
C.~Huang, J.~Huang, and X.~Liu, ``Cross-silo federated learning: Challenges and opportunities,'' [Online]. Available: \url{https://arxiv.org/pdf/2206.12949.pdf}.

\bibitem{incentive1}
M.~Tang and V.~W. Wong, ``An incentive mechanism for cross-silo federated learning: A public goods perspective,'' in \emph{Proc. IEEE Int. Conf. Comput. Commun. (INFOCOM)}.\hskip 1em plus 0.5em minus 0.4em\relax Vancouver, BC, Canada: IEEE, May 2021, pp. 1--10.

\bibitem{zhang2020batchcrypt}
C.~Zhang, S.~Li, J.~Xia, W.~Wang, F.~Yan, and Y.~Liu, ``Batchcrypt: Efficient homomorphic encryption for cross-silo federated learning,'' in \emph{Proc. USENIX Annu. Tech. Conf. (USENIX ATC)}, Jul. 2020, pp. 493--506.

\bibitem{huang2021personalized}
Y.~Huang \emph{et~al.}, ``Personalized cross-silo federated learning on non-{IID} data,'' in \emph{Proc. AAAI Conf. Artif. Intell. (AAAI)}, Virtual Event, Feb. 2021, pp. 7865--7873.

\bibitem{yu2020salvaging}
T.~Yu, E.~Bagdasaryan, and V.~Shmatikov, ``Salvaging federated learning by local adaptation,'' [Online]. Available: \url{https://arxiv.org/pdf/2002.04758.pdf}.

\bibitem{li2022silos}
Q.~Li, Y.~Diao, Q.~Chen, and B.~He, ``Federated learning on non-{IID} data silos: An experimental study,'' in \emph{Proc. {IEEE} 38th Int. Conf. Data Eng. (ICDE)}.\hskip 1em plus 0.5em minus 0.4em\relax Kuala Lumpur, Malaysia: IEEE, May 2022, pp. 965--978.

\bibitem{cho2022federate}
Y.~J. Cho, D.~Jhunjhunwala, T.~Li, V.~Smith, and G.~Joshi, ``To federate or not to federate: Incentivizing client participation in federated learning,'' [Online]. Available: \url{https://arxiv.org/pdf/2205.14840.pdf}.

\bibitem{ifca}
A.~Ghosh, J.~Chung, D.~Yin, and K.~Ramchandran, ``An efficient framework for clustered federated learning,'' in \emph{Proc. 34th Conf. Adv. Neural Inf. Process. Syst. ({NeurIPS})}, Virtual Event, Dec. 2020, pp. 19\,586--19\,597.

\bibitem{li2021soft}
C.~Li, G.~Li, and P.~K. Varshney, ``Federated learning with soft clustering,'' \emph{IEEE Internet Things J.}, vol.~9, no.~10, pp. 7773--7782, May 2021.

\bibitem{kim2021dynamic}
Y.~Kim, E.~Al~Hakim, J.~Haraldson, H.~Eriksson, J.~M.~B. da~Silva, and C.~Fischione, ``Dynamic clustering in federated learning,'' in \emph{Proc. IEEE Int. Conf. Commun. (ICC)}, Virtual Event, Jun. 2021, pp. 1--6.

\bibitem{yuan2021we}
H.~Yuan, W.~Morningstar, L.~Ning, and K.~Singhal, ``What do we mean by generalization in federated learning{?}'' [Online]. Available: \url{https://arxiv.org/pdf/2110.14216.pdf}.

\bibitem{hedonic1}
K.~Donahue and J.~Kleinberg, ``Optimality and stability in federated learning: A game-theoretic approach,'' in \emph{Proc. 35th Conf. Adv. Neural Inf. Process. Syst. ({NeurIPS})}, Virtual Event, Dec. 2021, pp. 1287--1298.

\bibitem{hedonic2}
------, ``Model-sharing games: Analyzing federated learning under voluntary participation,'' in \emph{Proc. AAAI Conf. Artif. Intell. (AAAI)}, Virtual Event, Feb. 2021, pp. 5303--5311.

\bibitem{hedonic3}
A.~Blum, N.~Haghtalab, R.~L. Phillips, and H.~Shao, ``One for one, or all for all: Equilibria and optimality of collaboration in federated learning,'' in \emph{Proc. Int. Conf. Mach. Learn. (ICML)}, Virtual Event, Jul. 2021, pp. 1005--1014.

\bibitem{hedonic4}
C.~Hasan, ``Incentive mechanism design for federated learning: Hedonic game approach,'' [Online]. Available: \url{https://arxiv.org/pdf/2101.09673.pdf}.

\bibitem{9842363}
G.~Huang, X.~Chen, T.~Ouyang, Q.~Ma, L.~Chen, and J.~Zhang, ``Collaboration in participant-centric federated learning: A game-theoretical perspective,'' \emph{{IEEE} Trans. Mob. Comput.}, pp. 1--16, to appear.

\bibitem{tan2022towards}
A.~Z. Tan, H.~Yu, L.~Cui, and Q.~Yang, ``Towards personalized federated learning,'' \emph{{IEEE} Trans. Neural Networks Learn. Syst.}, to appear.

\bibitem{arivazhagan2019federated}
M.~G. Arivazhagan, V.~Aggarwal, A.~K. Singh, and S.~Choudhary, ``Federated learning with personalization layers,'' [Online]. Available: \url{https://arxiv.org/pdf/1912.00818.pdf}.

\bibitem{fallah2020personalized}
A.~Fallah, A.~Mokhtari, and A.~Ozdaglar, ``Personalized federated learning with theoretical guarantees: A model-agnostic meta-learning approach,'' in \emph{Proc. 34th Conf. Adv. Neural Inf. Process. Syst. ({NeurIPS})}, Virtual Event, Dec. 2020, pp. 3557--3568.

\bibitem{huang2020fairness}
W.~Huang, T.~Li, D.~Wang, S.~Du, and J.~Zhang, ``Fairness and accuracy in federated learning,'' [Online]. Available: \url{https://arxiv.org/pdf/2012.10069.pdf}.

\bibitem{mohri2019agnostic}
M.~Mohri, G.~Sivek, and A.~T. Suresh, ``Agnostic federated learning,'' in \emph{Proc. Int. Conf. Mach. Learn. (ICML)}.\hskip 1em plus 0.5em minus 0.4em\relax Long Beach, CA, USA: PMLR, Jun. 2019, pp. 4615--4625.

\bibitem{9174890}
F.~Sattler, K.-R. Müller, and W.~Samek, ``Clustered federated learning: Model-agnostic distributed multitask optimization under privacy constraints,'' \emph{{IEEE} Trans. Neural Networks Learn. Syst.}, vol.~32, no.~8, pp. 3710--3722, 2021.

\bibitem{incentive2}
Y.~Deng \emph{et~al.}, ``Fair: Quality-aware federated learning with precise user incentive and model aggregation,'' in \emph{Proc. IEEE Int. Conf. Comput. Commun. (INFOCOM)}.\hskip 1em plus 0.5em minus 0.4em\relax Vancouver, BC, Canada: IEEE, May 2021, pp. 1--10.

\bibitem{incentive3}
T.~H. Thi~Le \emph{et~al.}, ``An incentive mechanism for federated learning in wireless cellular networks: An auction approach,'' \emph{IEEE Trans. Wireless Commun.}, vol.~20, no.~8, pp. 4874--4887, Aug. 2021.

\bibitem{incentive4}
Y.~Zhan, J.~Zhang, Z.~Hong, L.~Wu, P.~Li, and S.~Guo, ``A survey of incentive mechanism design for federated learning,'' \emph{{IEEE} Trans. Emerg. Top. Comput.}, vol.~10, no.~2, pp. 1035--1044, Apr.-Jun. 2021.

\bibitem{long2019generalization}
P.~M. Long and H.~Sedghi, ``Generalization bounds for deep convolutional neural networks,'' in \emph{Proc. Int. Conf. Learn. Repr. (ICLR)}, LA, USA, May 2019.

\bibitem{duan2020self}
M.~Duan, D.~Liu, X.~Chen, R.~Liu, Y.~Tan, and L.~Liang, ``Self-balancing federated learning with global imbalanced data in mobile systems,'' \emph{{IEEE} Trans. Parallel Distributed Syst.}, vol.~32, no.~1, pp. 59--71, Jan. 2020.

\bibitem{house2012consumer}
W.~House, ``Consumer data privacy in a networked world: A framework for protecting a privacy and promoting innovation in the global digital economy,'' [Online]. Available: \url{https://obamawhitehouse.archives.gov/sites/default/files/privacy-final.pdf}.

\bibitem{cifar10}
A.~Krizhevsky \emph{et~al.}, ``Learning multiple layers of features from tiny images,'' 2009.

\bibitem{ben2010theory}
S.~Ben-David, J.~Blitzer, K.~Crammer, A.~Kulesza, F.~Pereira, and J.~W. Vaughan, ``A theory of learning from different domains,'' \emph{Mach. Learn.}, vol.~79, pp. 151--175, Oct. 2009.

\bibitem{shalev2010learnability}
S.~Shalev-Shwartz, O.~Shamir, N.~Srebro, and K.~Sridharan, ``Learnability, stability and uniform convergence,'' \emph{J. Mach. Learn. Res.}, vol.~11, pp. 2635--2670, Dec. 2010.

\bibitem{shi2021gradient}
Y.~Shi, J.~Seely, P.~Torr, N.~Siddharth, A.~Hannun, N.~Usunier, and G.~Synnaeve, ``Gradient matching for domain generalization,'' in \emph{Proc. Int. Conf. Learn. Repr. (ICLR)}, Virtual Event, May 2021.

\bibitem{zhao2020dataset}
B.~Zhao, K.~R. Mopuri, and H.~Bilen, ``Dataset condensation with gradient matching,'' in \emph{Proc. Int. Conf. Learn. Repr. (ICLR)}, Virtual Event, May 2020.

\bibitem{kmeans}
J.~MacQueen, ``Some methods for classification and analysis of multivariate observations,'' in \emph{Proc. 5th Berkeley Symp. Math. Statist. Probability}, Oakland, CA, USA, 1967, pp. 281--297.

\bibitem{kmeans++}
D.~Arthur and S.~Vassilvitskii, ``K-means++: the advantages of careful seeding,'' in \emph{Proc. Eighteenth Annu. {ACM-SIAM} Symp. Discret Algorithms (SODA)}, New Orleans, LA, USA, Jan. 2007, pp. 1027--1035.

\bibitem{ward}
J.~H. Ward~Jr, ``Hierarchical grouping to optimize an objective function,'' \emph{J. Amer. Statistical Assoc.}, vol.~58, no. 301, pp. 236--244, 1963.

\bibitem{grosswendt2020theoretical}
A.-K. Gro{\ss}wendt, ``Theoretical analysis of hierarchical clustering and the shadow vertex algorithm,'' Ph.D. dissertation, Universit{\"a}ts-und Landesbibliothek Bonn, 2020.

\bibitem{chen2020convergence}
M.~Chen, H.~V. Poor, W.~Saad, and S.~Cui, ``Convergence time optimization for federated learning over wireless networks,'' \emph{IEEE Trans. Wireless Commun.}, vol.~20, no.~4, pp. 2457--2471, Apr. 2021.

\bibitem{convergence1}
H.~Xing, O.~Simeone, and S.~Bi, ``Federated learning over wireless device-to-device networks: Algorithms and convergence analysis,'' \emph{IEEE Journal on Selected Areas in Communications}, vol.~39, no.~12, pp. 3723--3741, Dec. 2021.

\bibitem{convergence2}
C.~T. Dinh, N.~H. Tran, M.~N.~H. Nguyen, C.~S. Hong, W.~Bao, A.~Y. Zomaya, and V.~Gramoli, ``Federated learning over wireless networks: Convergence analysis and resource allocation,'' \emph{IEEE/ACM Transactions on Networking}, vol.~29, no.~1, pp. 398--409, Feb. 2021.

\bibitem{lin2022coexisting}
Z.~Lin, H.~Liu, and Y.-J.~A. Zhang, ``{CFLIT}: Coexisting federated learning and information transfer,'' \emph{{IEEE} Trans. Wireless Commun.}, to appear.

\bibitem{wang2022unified}
S.~Wang and M.~Ji, ``A unified analysis of federated learning with arbitrary client participation,'' in \emph{Proc. 35th Conf. Adv. Neural Inf. Process. Syst. ({NeurIPS})}, LA, USA, Nov. 2022.

\bibitem{sun2021semi}
Y.~Sun, J.~Shao, Y.~Mao, J.~H. Wang, and J.~Zhang, ``Semi-decentralized federated edge learning for fast convergence on {non-IID} data,'' in \emph{Proc. IEEE Wireless Commun. Netw. Conf. (WCNC)}, Austin, TX, USA, Apr. 2022.

\bibitem{wang2021cooperative}
J.~Wang and G.~Joshi, ``Cooperative {SGD}: A unified framework for the design and analysis of local-update {SGD} algorithms,'' \emph{J. Mach. Learn. Res.}, vol.~22, no.~1, pp. 9709--9758, Jan. 2021.

\bibitem{po_fl}
Y.~Sun, Z.~Lin, Y.~Mao, S.~Jin, and J.~Zhang, ``Channel and gradient-importance aware device scheduling for over-the-air federated learning,'' \emph{{IEEE} Trans. Wireless Commun.}, 2023.

\bibitem{bottou2018optimization}
L.~Bottou, F.~E. Curtis, and J.~Nocedal, ``Optimization methods for large-scale machine learning,'' \emph{{SIAM} Review}, vol.~60, no.~2, pp. 223--311, Aug. 2018.

\bibitem{partial}
P.~Krishna, M.~Kshitiz, M.~Abdel-Rahman, R.~Mike, S.~Maziar, and X.~Lin, ``Federated learning with partial model personalization,'' in \emph{Proc. Int. Conf. Mach. Learn. (ICML)}.\hskip 1em plus 0.5em minus 0.4em\relax Baltimore, MD, USA: PMLR, Jul. 2022, pp. 17\,716--17\,758.

\bibitem{gu2022why}
X.~Gu, K.~Lyu, L.~Huang, and S.~Arora, ``Why (and when) does local {SGD} generalize better than sgd?'' in \emph{Proc. Int. Conf. Learn. Repr. (ICLR)}, Kigali, Rwanda, May 2023.

\bibitem{guohybrid}
Y.~Guo, Y.~Sun, R.~Hu, and Y.~Gong, ``Hybrid local {SGD} for federated learning with heterogeneous communications,'' in \emph{Proc. Int. Conf. Learn. Repr. (ICLR)}, Virtual Event, Apr. 2022.

\bibitem{fedbn}
X.~Li, M.~JIANG, X.~Zhang, M.~Kamp, and Q.~Dou, ``Fedbn: Federated learning on non-{IID} features via local batch normalization,'' in \emph{Proc. Int. Conf. Learn. Repr. (ICLR)}, Virtual Event, May 2020.

\bibitem{svhn}
Y.~Netzer, T.~Wang, A.~Coates, A.~Bissacco, B.~Wu, and A.~Y. Ng, ``Reading digits in natural images with unsupervised feature learning,'' [Online]. Available: \url{https://storage.googleapis.com/pub-tools-public-publication-data/pdf/37648.pdf}.

\bibitem{usps}
J.~J. Hull, ``A database for handwritten text recognition research,'' \emph{{IEEE} Trans. Pattern Anal. Mach. Intell.}, vol.~16, no.~5, pp. 550--554, May 1994.

\bibitem{SynthDigitsmnistm}
Y.~Ganin and V.~Lempitsky, ``Unsupervised domain adaptation by backpropagation,'' in \emph{Proc. Int. Conf. Mach. Learn. (ICML)}.\hskip 1em plus 0.5em minus 0.4em\relax Lille, France: PMLR, Jul. 2015, pp. 1180--1189.

\bibitem{mnist}
Y.~LeCun, L.~Bottou, Y.~Bengio, and P.~Haffner, ``Gradient-based learning applied to document recognition,'' \emph{Proc. IEEE}, vol.~86, no.~11, pp. 2278--2324, Nov. 1998.

\bibitem{fmnist}
H.~Xiao, K.~Rasul, and R.~Vollgraf, ``{Fashion-MNIST}: A novel image dataset for benchmarking machine learning algorithms,'' [Online]. Available: \url{https://arxiv.org/pdf/1708.07747.pdf}.

\bibitem{mu2024federated}
J.~Mu, Y.~Cui, W.~Ouyang, Z.~Yang, W.~Yuan, and X.~Jing, ``Federated learning in {6G} non-terrestrial network for iot services: From the perspective of perceptive mobile network,'' \emph{IEEE Network}, 2024.

\end{thebibliography}

\clearpage

\appendices
\section{Proof of Theorem 1}\label{proof:thm1}
Denote the true labeling functions on datasets $\mathcal{D}_i^{te}$ and $\hat{\mathcal{D}}_{G_i}^{tr}$ by $f_i(\cdot) \triangleq f_i(\cdot; \mathbf{w}_i^{*}): \mathcal{X}\rightarrow \mathcal{Y}$ and $f_{G_i}(\cdot) \triangleq f_{G_i}(\cdot; \hat{\mathbf{w}}_{G_i}^{*}): \mathcal{X}\rightarrow \mathcal{Y}$, which are characterized by optimal models $\mathbf{w}_i^{*}$ and $\hat{\mathbf{w}}_{G_i}^{*}$, respectively. 
In other words, they satisfy 
\begin{equation}
    y=f_i(\mathbf{x}; \mathbf{w}_i^{*}), \forall (\mathbf{x}, y) \in \mathcal{D}_i^{te},
\end{equation}
and 
\begin{equation}
    y=f_{G_i}(\mathbf{x}; \hat{\mathbf{w}}_{G_i}^{*}), \forall (\mathbf{x}, y) \in \hat{\mathcal{D}}_{G_i}^{tr}.
\end{equation}
Define two auxiliary risks as follows: 
\begin{equation}
    \epsilon_i(h, f_{G_i}) \triangleq \mathbb{E}_{\mathbf{x}\sim \mathcal{P}_{i}} [|h(\mathbf{x}) - f_{G_i}(\mathbf{x})|],
\end{equation}
\begin{equation}
    \epsilon_{G_i}(h, f_i) \triangleq \mathbb{E}_{\mathbf{x}\sim \hat{\mathcal{P}}_{G_i}} [|h(\mathbf{x}) - f_i(\mathbf{x})|].
\end{equation}

We begin with decomposing the test error $\epsilon_{i}(h)$ as follows:
\begin{align}
    & \epsilon_{i}(h) \nonumber \\
    \overset{(\text{a})}{=} & \epsilon_{i}(h) - \epsilon_{G_i}(h) + \epsilon_{G_i}(h) - \epsilon_{G_i}(h, f_i) + \epsilon_{G_i}(h, f_i) \nonumber \\
    \overset{(\text{b})}{\leq} & \epsilon_{G_i}(h) + \left| \epsilon_{G_i}(h, f_i) - \epsilon_{G_i}(h) \right| + \left| \epsilon_{i}(h, f_i) - \epsilon_{G_i}(h, f_i) \right| \nonumber \\
    \overset{(\text{c})}{\leq} & \epsilon_{G_i}(h) + \mathbb{E}_{\hat{\mathcal{P}}_{G_i}} \left[ f_{G_i}(\mathbf{x}) - f_{i}(\mathbf{x}) \right] + \left| \epsilon_{i}(h, f_i) - \epsilon_{G_i}(h, f_i) \right| \nonumber \\
    \overset{(\text{d})}{\leq} & \epsilon_{G_i}(h) + \mathbb{E}_{\hat{\mathcal{P}}_{G_i}} \left[ f_{G_i}(\mathbf{x}) - f_{i}(\mathbf{x}) \right] \nonumber \\
     &\quad + \int \left| \hat{\mathcal{P}}_{G_i} - \mathcal{P}_{i} \right| \left| h(\mathbf{x}) - f_{i}(\mathbf{x}) \right| d\mathbf{x} \nonumber \\
    \overset{(\text{e})}{\leq} & \epsilon_{G_i}(h) + \mathbb{E}_{\hat{\mathcal{P}}_{G_i}} \left[ f_{G_i}(\mathbf{x}) - f_{i}(\mathbf{x}) \right] + d_1\left(\hat{\mathcal{P}}_{G_i}, \mathcal{P}_{i}\right),
    \label{eq:help-1}
\end{align}
where (b) follows the fact $x+y\leq |x|+|y|, \forall x,y\in\mathbb{R}$. Besides, (c) and (d) apply the definition of several risks directly. Moreover, we obtain the result in (e) following \cite{ben2010theory}, which provides an upper bound for the error of a hypothesis on the target domain.

In (a), if we opt to add and subtract $\epsilon_i(h, f_{G_i})$ instead of $\epsilon_{G_i}(h, f_i)$, we arrive at a similar result of \eqref{eq:help-1} except substituting $\mathbb{E}_{\hat{\mathcal{P}}_{G_i}} \left[ f_{G_i}(\mathbf{x}) - f_{i}(\mathbf{x}) \right]$ with $\mathbb{E}_{\mathcal{P}_{i}} \left[ f_{G_i}(\mathbf{x}) - f_{i}(\mathbf{x}) \right]$. Therefore, by defining a constant $\lambda = \min \{\mathbb{E}_{\mathcal{P}_{i}} \left[ f_{G_i}(\mathbf{x}) - f_{i}(\mathbf{x}) \right], \mathbb{E}_{\hat{\mathcal{P}}_{G_i}} \left[ f_{G_i}(\mathbf{x}) - f_{i}(\mathbf{x}) \right] \}$, we obtain the following result:
\begin{align}
    \epsilon_{i}(h) \leq \epsilon_{G_i}(h)+ d_1\left(\hat{\mathcal{P}}_{G_i}, \mathcal{P}_{i}\right)+ \lambda.
\end{align}

Next, we provide an upper bound for the training error $\epsilon_{G_i}(h)$.
According to \cite[Theorem~2]{shalev2010learnability}, with probability at least $1-\delta$, we have
\begin{align}
    \epsilon_{G_i}(h)= \mathbb{E}_{\mathbf{x} \sim \hat{\mathcal{P}}_{G_i}} \left[\left|h_i(\mathbf{x})-f_{G_i}(\mathbf{x}) \right|\right] \leq \frac{4L^2}{\delta \mu \hat{D}_{G_i}},
    \label{eq:help-2}
\end{align}
where $\hat{D}_{G_i}$ is the size of the training dataset at cluster $G_i$.
Plugging \eqref{eq:help-2} into \eqref{eq:help-1} completes the proof.
\qed

\section{Additional Lemma}
In the following lemma, we analyze the local decay of all clients at training epoch $t$.
\begin{lemma}
Let $a_i^t \triangleq \frac{D_i}{\hat{D}_{G_i^t}}$.
With Assumptions 1-2, we have
    \begin{align}
    & \sum_{i\in\mathcal{N}} \mathbb{E}[J_i(\mathbf{w}_i^{t+1})] - \sum_{i\in\mathcal{N}} \mathbb{E}[J_i(\mathbf{w}_i^{t})] \nonumber \\
    \leq & - \eta^t \sum_{i\in\mathcal{N}} \mathbb{E}\langle \nabla J_i(\mathbf{w}_i^t), \sum_{j\in G_i^t} a_j^t \nabla J_j(\mathbf{w}_j^t) \rangle \nonumber \\
    & + \frac{L (\eta^t)^2}{2} \sum_{i\in\mathcal{N}} \mathbb{E} \big[\big\| \sum_{j\in G_i^t} a_j^t \nabla J_j(\mathbf{w}_j^t) \big\|^2\big] + \frac{L (\eta^t)^2 N}{2} \sigma^2.
    \label{eq:lemma1}
    \end{align}
\end{lemma}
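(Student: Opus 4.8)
The plan is to apply the $L$-smoothness descent inequality (Assumption~\ref{smooth}) to each local objective $J_i$ separately and then sum over $i\in\mathcal N$, identifying $\mathbf g_j^t$ with an unbiased stochastic gradient of $J_j$ at $\mathbf w_j^t$ whose variance is at most $\sigma^2$ (Assumption~\ref{sgd}), as reflected in the right-hand side of \eqref{eq:lemma1}. First I would rewrite the aggregation rule of Algorithm~\ref{algorithm} in the compact per-client form $\mathbf w_i^{t+1} = \mathbf w_i^{t} - \eta^t \sum_{j\in G_i^t} a_j^t \mathbf g_j^t$, which for a singleton group collapses to $\mathbf w_i^{t+1} = \mathbf w_i^{t} - \eta^t \mathbf g_i^t$. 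Smoothness of $J_i$ then gives, for each $i$,
\begin{equation}
J_i(\mathbf w_i^{t+1}) \le J_i(\mathbf w_i^{t}) - \eta^t\Big\langle \nabla J_i(\mathbf w_i^{t}),\, \textstyle\sum_{j\in G_i^t} a_j^t \mathbf g_j^t \Big\rangle + \frac{L(\eta^t)^2}{2}\Big\| \textstyle\sum_{j\in G_i^t} a_j^t \mathbf g_j^t \Big\|^2 .
\end{equation}

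Next I would take expectations over the epoch-$t$ sampling. Unbiasedness turns the cross term into $-\eta^t\,\mathbb E\big\langle \nabla J_i(\mathbf w_i^{t}), \sum_{j\in G_i^t} a_j^t \nabla J_j(\mathbf w_j^{t})\big\rangle$, which is already the first term of \eqref{eq:lemma1}. For the quadratic term I would use the bias/variance split $\mathbf g_j^t = \nabla J_j(\mathbf w_j^t)+\mathbf n_j^t$ with $\mathbb E[\mathbf n_j^t]=0$; invoking independence of the mini-batch noises across the clients of a group, the cross terms vanish and $\mathbb E\big\|\sum_j a_j^t \mathbf g_j^t\big\|^2 = \mathbb E\big\|\sum_j a_j^t \nabla J_j(\mathbf w_j^t)\big\|^2 + \sum_j (a_j^t)^2\,\mathbb E\|\mathbf n_j^t\|^2$. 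Since $a_j^t = D_j/\hat D_{G_i^t}\in[0,1]$ and $\sum_{j\in G_i^t} a_j^t = 1$, we have $\sum_j (a_j^t)^2 \le 1$, so the noise contribution of client $i$'s update is bounded by $\sigma^2$.

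Summing the resulting per-client inequality over $i\in\mathcal N$ then completes the proof: the left side telescopes into $\sum_i \mathbb E[J_i(\mathbf w_i^{t+1})] - \sum_i \mathbb E[J_i(\mathbf w_i^{t})]$, the cross terms assemble into the first term of \eqref{eq:lemma1}, the deterministic halves of the quadratic terms into the second, and the $N$ per-client noise bounds (each $\le \sigma^2$) into $\frac{L(\eta^t)^2N}{2}\sigma^2$.

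The hard part will be the treatment of the aggregated stochastic gradient's second moment: one has to justify that the mini-batch noises $\mathbf n_j^t$ of clients in the same group are (conditionally) independent so that the cross terms drop, and to keep the weights $a_j^t$ under control via $\sum_j (a_j^t)^2\le 1$ so that the group-level variance collapses to a single $\sigma^2$. Everything else (the descent lemma, linearity of expectation, and the sum over clients) is routine bookkeeping.
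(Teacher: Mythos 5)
Your proposal is correct and follows essentially the same route as the paper's own proof: per-client descent lemma under $L$-smoothness, unbiasedness to turn the cross term into the inner product with $\sum_{j\in G_i^t} a_j^t \nabla J_j(\mathbf{w}_j^t)$, a bias--variance split of the aggregated stochastic gradient bounded by $\sigma^2$, and a sum over $i\in\mathcal{N}$. The only cosmetic difference is that you justify the group-level variance bound via conditional independence and $\sum_j (a_j^t)^2\le 1$, while the paper simply invokes Assumption 2 (the same bound also follows from Jensen's inequality since $\sum_j a_j^t=1$), so there is no substantive gap.
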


\begin{proof}
Using Assumption 1, we have
\begin{align}
    & \mathbb{E}[J_i(\mathbf{w}_i^{t+1})] - \mathbb{E}[J_i(\mathbf{w}_i^{t})] \nonumber\\
    \leq & \mathbb{E}\langle \nabla J_i(\mathbf{w}_i^{t}), \mathbf{w}_i^{t+1} - \mathbf{w}_i^{t} \rangle + \frac{L}{2} \mathbb{E}[\left\| \mathbf{w}_i^{t+1} - \mathbf{w}_i^{t}\right\|^2] \nonumber\\
    \overset{(\text{a})}{=} & \mathbb{E}\langle \nabla J_i(\mathbf{w}_i^{t}), - \eta^t \sum_{j\in G_i^t} a_j^t \nabla J_j(\mathbf{w}_j^{t}) \rangle + \frac{L}{2} \mathbb{E}[\left\| \mathbf{w}_i^{t+1} - \mathbf{w}_i^{t}\right\|^2] \nonumber\\
    = & \mathbb{E}\langle \nabla J_i(\mathbf{w}_i^{t}), - \eta^t \sum_{j\in G_i^t} a_j^t \nabla J_j(\mathbf{w}_j^{t}) \rangle 
    + \frac{L (\eta^t)^2}{2} \mathbb{E}[\| \sum_{j\in G_i^t} a_j^t \mathbf{g}_j^{t} \|^2] \nonumber \\
    = & \mathbb{E}\langle \nabla J_i(\mathbf{w}_i^{t}), - \eta^t \sum_{j\in G_i^t} a_j^t \nabla J_j(\mathbf{w}_j^{t}) \rangle + \frac{L (\eta^t)^2}{2} \mathbb{E}[\| \sum_{j\in G_i^t} a_j^t \nabla J_j(\mathbf{w}_j^{t})\|^2 ] \nonumber\\
    & + \frac{L (\eta^t)^2}{2} \mathbb{E}[\| \sum_{j\in G_i^t} a_j^t \nabla J_j(\mathbf{w}_j^{t}) -  \sum_{j\in G_i^t} a_j^t \mathbf{g}_j^{t} \|^2]  \nonumber \\
    \overset{(\text{b})}{\leq} & - \eta^t \mathbb{E}\langle \nabla J_i(\mathbf{w}_i^{t}), \sum_{j\in G_i^t} a_j^t \nabla J_j(\mathbf{w}_j^{t}) \rangle \nonumber\\
    & + \frac{L (\eta^t)^2}{2} \mathbb{E} \Big[\Big\| \sum_{j\in G_i^t} a_j^t \nabla J_j(\mathbf{w}_j^{t}) \Big\|^2\Big] + \frac{L (\eta^t)^2}{2} \sigma^2,
\end{align}
where (a) follows the gradient unbiasedness and (b) follows the bounded gradient variance in Assumption 2.
    
\end{proof}

\section{Proof of Theorem 2}\label{proof:thm2}

\textbf{Upper bounding the inner product.}
We upper bound the inner product in \eqref{eq:lemma1} as follows:
\begin{align}
    & - \mathbb{E} \langle \nabla J_i(\mathbf{w}_i^{t}), \sum_{j\in G_i^t} a_j^t \nabla J_j(\mathbf{w}_j^{t}) \rangle \nonumber\\
    \overset{(\text{a})}{=} & - \frac{1}{2} \mathbb{E}[\| \nabla J_i(\mathbf{w}_i^{t}) \|^2 ] - \frac{1}{2} \mathbb{E}[\| \sum_{j\in G_i^t} a_j^t \nabla J_j(\mathbf{w}_j^{t}) \|^2 ] \nonumber \\
    & + \frac{1}{2} \mathbb{E}[\| \nabla J_i(\mathbf{w}_i^{t}) -\sum_{j\in G_i^t} a_j^t \nabla J_j(\mathbf{w}_j^{t}) \|^2 ] \nonumber\\
    \overset{(\text{b})}{=} & - \frac{1}{2} \mathbb{E}[\| \nabla J_i(\mathbf{w}_i^{t}) \|^2 ] - \frac{1}{2} \mathbb{E}[\| \sum_{j\in G_i^t} a_j^t \nabla J_j(\mathbf{w}_j^{t}) \|^2 ] \nonumber \\
    & + \frac{1}{2} \mathbb{E}[\| \nabla J_i(\mathbf{w}_i^{t}) - \nabla J_i(\mathbf{w}_j^{t}) + \nabla J_i(\mathbf{w}_j^{t}) -\sum_{j\in G_i^t} a_j^t \nabla J_j(\mathbf{w}_j^{t}) \|^2 ] \nonumber \\
    \overset{(\text{c})}{\leq} & - \frac{1}{2} \mathbb{E}[\| \nabla J_i(\mathbf{w}_i^{t}) \|^2 ] - \frac{1}{2} \mathbb{E}[\| \sum_{j\in G_i^t} a_j^t \nabla J_j(\mathbf{w}_j^{t}) \|^2 ] \nonumber\\
    & + \mathbb{E}[\| \nabla J_i(\mathbf{w}_i^{t}) - \nabla J_i(\mathbf{w}_j^{t}) \|^2 ] \nonumber\\
    & + \mathbb{E}[\|  \nabla J_i(\mathbf{w}_j^{t}) -\sum_{j\in G_i^t} a_j^t \nabla J_j(\mathbf{w}_j^{t}) \|^2 ] \nonumber\\
    \overset{(\text{d})}{\leq} & - \frac{1}{2} \mathbb{E}[\| \nabla J_i(\mathbf{w}_i^{t}) \|^2 ] - \frac{1}{2} \mathbb{E}[\| \sum_{j\in G_i^t} a_j^t \nabla J_j(\mathbf{w}_j^{t}) \|^2 ] \nonumber\\
    & + \mathbb{E}[\| \nabla J_i(\mathbf{w}_i^{t}) - \nabla J_i(\mathbf{w}_j^{t}) \|^2 ] + \sum_{j\in G_i^t} a_j^t \kappa_{i,j}^2,
    \label{eq:inner}
\end{align}
where (a) follows the fact $\langle \mathbf{x}, \mathbf{y} \rangle = \frac{1}{2} \|\mathbf{x}\|^2 + \frac{1}{2} \|\mathbf{y}\|^2 - \frac{1}{2} \|\mathbf{x} - \mathbf{y}\|^2$, in (b) we plus and minus $\nabla J_i(\mathbf{w}_j^{t})$ in the last term, (c) employs the inequality $\frac{1}{2} \|\mathbf{x} + \mathbf{y}\|^2 \leq \|\mathbf{x}\|^2 +\|\mathbf{y}\|^2$, and (d) follows Assumption 3. It is worth noting that by collaborating with similar clients, the last term $\sum_{j\in G_i^t} a_j^t \kappa_{i,j}^2$ can be mitigated due to their similar objectives.

For the third term in the RHS of \eqref{eq:inner}, we upper bound it as follows:
\begin{align}
    & \mathbb{E}[\| \nabla J_i(\mathbf{w}_i^{t+1}) - \nabla J_i(\mathbf{w}_j^{t}) \|^2 ] \nonumber \\
    \overset{(\text{e})}{\leq} & L^2 \mathbb{E}[\| \mathbf{w}_i^{t} - \mathbf{w}_j^{t} \|^2 ] \nonumber \\
    = & L^2 \mathbb{E}[\| \mathbf{w}^{0} - \sum_{s=0}^{t-1} \eta^s \sum_{z\in\mathcal{C}_i^s} a_z^s \mathbf{g}_z^s - \mathbf{w}^{0} + \sum_{s=0}^{t-1} \eta^s \sum_{z\in\mathcal{C}_j^s} a_z^s \mathbf{g}_z^s \|^2] \nonumber \\
    \overset{(\text{f})}{\leq} & L^2 \sum_{s=0}^{t-1} (\eta^s)^2 \mathbb{E}[\| \sum_{z\in\mathcal{C}_i^s} a_z^s (\mathbf{g}_z^s - \nabla J_i(\mathbf{w}_z^{s}) + \nabla J_i(\mathbf{w}_z^{s})) \nonumber \\
    & - \sum_{z\in\mathcal{C}_j^s} a_z^s (\mathbf{g}_z^s - \nabla J_j(\mathbf{w}_z^{s}) + \nabla J_j(\mathbf{w}_z^{s})) \|^2] \nonumber \\
    \overset{(\text{g})}{\leq} & L^2 \sum_{s=0}^{t-1} (\eta^s)^2 \mathbb{E}[\| \sum_{z\in\mathcal{C}_i^s} a_z^s ( \nabla J_i(\mathbf{w}_z^{s}) - \nabla J_j(\mathbf{w}_z^{s})) \|^2] \nonumber \\
    & + 2 L^2 \sum_{s=0}^{t-1} (\eta^s)^2 \sigma^2 \nonumber \\
    \overset{(\text{h})}{\leq} & L^2 \sum_{s=0}^{t-1} (\eta^s)^2 \sum_{z\in\mathcal{C}_i^s} a_z^s \kappa_{i,z}^2 + 2 L^2 \sum_{s=0}^{t-1} (\eta^s)^2 \sigma^2,\label{eq:39}
\end{align}
where (e) follows Assumption 1, and (f) follows the independence among different iterations. (g) and (h) hold due to Assumptions 2 and 3, respectively.

We plug the results of \eqref{eq:inner} and \eqref{eq:39} back into the RHS of \eqref{eq:lemma1} as follows:
\begin{align}
    & \sum_{i\in\mathcal{N}} \mathbb{E}[J_i(\mathbf{w}_i^{t})] - \sum_{i\in\mathcal{N}} \mathbb{E}[J_i(\mathbf{w}_i^{t})] \nonumber\\
    \leq & - \frac{\eta^t}{2} \sum_{i\in\mathcal{N}} \mathbb{E}[\| \nabla J_i(\mathbf{w}_i^{t}) \|^2 ] - \sum_{i\in\mathcal{N}} \frac{\eta^t}{2} \mathbb{E}[\| \sum_{j\in G_i^t} a_j^t \nabla J_j(\mathbf{w}_j^{t}) \|^2 ] \nonumber\\
    & + \eta^t \sum_{i\in\mathcal{N}} \sum_{j\in G_i^t} a_j^t \kappa_{i,j}^2
    + \frac{L (\eta^t)^2}{2} \sum_{i\in\mathcal{N}} \mathbb{E} [\| \sum_{j\in G_i^t} a_j^t \nabla J_j(\mathbf{w}_j^{t}) \|^2] \nonumber\\
    & + \frac{L (\eta^t)^2 N}{2} \sigma^2 + \eta^t \sum_{i\in\mathcal{N}} \mathbb{E}[\| \nabla J_i(\mathbf{w}_i^{t}) - \nabla J_i(\mathbf{w}_j^{t}) \|^2 ] \nonumber\\
    \leq & - \frac{\eta^t}{2} \sum_{i\in\mathcal{N}} \mathbb{E}[\| \nabla J_i(\mathbf{w}_i^{t}) \|^2 ] \nonumber\\
    & - (\frac{\eta^t}{2} - \frac{L (\eta^t)^2}{2}) \sum_{i\in\mathcal{N}} \mathbb{E}[\| \sum_{j\in G_i^t} a_j^t \nabla J_j(\mathbf{w}_j^{t}) \|^2 ] \nonumber\\
    & + \eta^t \sum_{i\in\mathcal{N}} \sum_{j\in G_i^t} a_j^t \kappa_{i,j}^2 \nonumber\\
    & + \eta^t L^2 \sum_{s=0}^{t-1} (\eta^s)^2 \sum_{i\in\mathcal{N}} \sum_{z\in\mathcal{C}_i^s} a_z^s \kappa_{i,z}^2 \nonumber\\
    & + \frac{L (\eta^t)^2 N}{2} \sigma^2 + 2 \eta^t L^2 N \sum_{s=0}^{t-1} (\eta^s)^2 \sigma^2.
    \label{eq:help-11}
\end{align}

\textbf{Completing the proof.}
Now we rearrange the terms and sum up both sides of \eqref{eq:help-11} over $ t=0,1,\dots,T-1$ to obtain the following result:
\begin{align}
    & \sum_{t=0}^{T-1} \frac{\eta^t}{2} \sum_{i\in\mathcal{N}} \mathbb{E}[\| \nabla J_i(\mathbf{w}_i^{t}) \|^2 ] \nonumber\\
    \leq & \sum_{i\in\mathcal{N}} \mathbb{E}[J_i(\mathbf{w}_i^{0})] - \sum_{i\in\mathcal{N}} \mathbb{E}[J_i(\mathbf{w}_i^{*})] \nonumber\\
    & + \sum_{t=0}^{T-1}  \eta^t \sum_{i\in\mathcal{N}} \sum_{j\in G_i^t} a_j^t \kappa_{i,j}^2 \nonumber\\
    & + \sum_{t=0}^{T-1} \eta^t L^2 \sum_{s=0}^{t-1} (\eta^s)^2 \sum_{i\in\mathcal{N}} \sum_{z\in\mathcal{C}_i^s} a_z^s \kappa_{i,z}^2 \nonumber\\
    & + \sum_{t=0}^{T-1} \frac{L (\eta^t)^2 N}{2} \sigma^2 + \sum_{t=0}^{T-1} 2 \eta^t L^2 N \sum_{s=0}^{t-1} (\eta^s)^2 \sigma^2,
    \label{eq:help-3}
\end{align}
where we use $- (\frac{\eta^t}{2} - \frac{L (\eta^t)^2}{2}) < 0$ and $- \mathbb{E}[J_i(\mathbf{w}_i^{T})] \leq - \mathbb{E}[J_i(\mathbf{w}_i^{*})]$.

By dividing both sides of \eqref{eq:help-3} over $\sum_{t=0}^{T-1} \frac{\eta^t}{2}$, we complete the proof of Theorem 2.
\qed

\section{Proof of Corollary 1}
Recall the inequality \eqref{eq:thm2} as:
\begin{align}
    & \frac{1}{\xi_T} \sum_{t=0}^{T-1} \eta^t \sum_{i\in\mathcal{N}} \mathbb{E}[\| \nabla J_i(\mathbf{w}_i^{t}) \|^2 ] \nonumber\\
    \leq & \frac{2}{\xi_T} \underbrace{\sum_{i\in\mathcal{N}} \left(\mathbb{E}[J_i(\mathbf{w}_i^{0})] -  \mathbb{E}[J_i(\mathbf{w}_i^{*})]\right)}_{S_1} \\
    & + \frac{1}{\xi_T} \underbrace{\sum_{t=0}^{T-1} \eta^t \sum_{i\in\mathcal{N}} \sum_{j\in G_i^t} a_j^t \kappa_{i,j}^2}_{S_2} \nonumber\\
    & + \frac{1}{\xi_T} \underbrace{\sum_{t=0}^{T-1} \eta^t L^2 \sum_{s=0}^{t-1} (\eta^s)^2 \sum_{i\in\mathcal{N}} \sum_{z\in\mathcal{C}_i^s} a_z^s \kappa_{i,z}^2}_{S_3} \\
    & + \frac{1}{\xi_T} \underbrace{\sum_{t=0}^{T-1} L (\eta^t)^2 N \sigma^2  + \sum_{t=0}^{T-1} \eta^t L^2 N \sum_{s=0}^{t-1} (\eta^s)^2 \sigma^2}_{S_4}.
\end{align}

If the learning rates satisfy $\lim_{T\rightarrow \infty} \xi_T = \lim_{T\rightarrow \infty} \sum_{t=0}^{T-1} \eta^t = \infty$, we have:
\begin{align}
    \lim_{T\rightarrow \infty} \frac{S_1}{\xi_T} = 0,
\end{align}
as $S_1$ is irrelevant of $t$.
Besides, the second term satisfies:
\begin{equation}
    \lim_{T\rightarrow \infty} \frac{S_2}{\xi_T} = \mathcal{O} \left(\frac{\sum_{t=0}^{T-1} \eta^t (\sum_{i\in\mathcal{N}} \sum_{j\in G_i^t} a_j^t \kappa_{i,j}^2)^p}{\sum_{t=0}^{T-1} \eta^t}\right) = 0,
\end{equation}
since $\eta^t=\mathcal{O} \left( (\sum_{i\in\mathcal{N}} \sum_{j\in G_i^t} a_j^t \kappa_{i,j}^2)^{1/p} \right)$ with $p>0$.
In addition, we have $\lim_{T\rightarrow \infty} \frac{S_3}{\xi_T} = 0$ and $\lim_{T\rightarrow \infty} \frac{S_4}{\xi_T} = 0$.
Thus, the RHS of \eqref{eq:thm2} converges to zero when $T\rightarrow \infty$.

\qed

\end{document}